\setlist[itemize]{leftmargin=2em}
\setlist[enumerate]{leftmargin=2em}
\def\barpi{{\overline{\pi}}}
\def\hQ{\widehat{\cQ}}
\def\hU{\widehat{U}}
\newcommand{\frs}[1]{\mathfrak{s}_{#1}}
\newcommand{\frv}[1]{\mathfrak{v}_{#1}}
\def\barQ{{\overline{\cQ}}}
\def\barA{{\overline{\cA}}}
\def\barV{{\overline{\cV}}}
\def\tpi{\tilde\pi}
\def\tnu{\tilde\nu}
\def\tv{{\textsc{tv}}}
\newcommand{\hepso}[1]{\hat{\eps}_{#1}}
\DeclareFontFamily{OMX}{MnSymbolE}{}
\DeclareSymbolFont{MnLargeSymbols}{OMX}{MnSymbolE}{m}{n}
\DeclareFontShape{OMX}{MnSymbolE}{m}{n}{
    <-6>  MnSymbolE5
   <6-7>  MnSymbolE6
   <7-8>  MnSymbolE7
   <8-9>  MnSymbolE8
   <9-10> MnSymbolE9
  <10-12> MnSymbolE10
  <12->   MnSymbolE12
}{}
\DeclareFontShape{OMX}{MnSymbolE}{b}{n}{
    <-6>  MnSymbolE-Bold5
   <6-7>  MnSymbolE-Bold6
   <7-8>  MnSymbolE-Bold7
   <8-9>  MnSymbolE-Bold8
   <9-10> MnSymbolE-Bold9
  <10-12> MnSymbolE-Bold10
  <12->   MnSymbolE-Bold12
}{}
\let\llangle\@undefined
\let\rrangle\@undefined
\DeclareMathDelimiter{\llangle}{\mathopen}{MnLargeSymbols}{'164}{MnLargeSymbols}{'164}
\DeclareMathDelimiter{\rrangle}{\mathclose}{MnLargeSymbols}{'171}{MnLargeSymbols}{'171}
\newcommand{\iip}[2]{\left\llangle{#1}, {#2} \right\rrangle}
\title{\textbf{Actor-critic is implicitly biased\\towards high entropy optimal policies}}
\author{Yuzheng Hu\qquad{}\quad{}Ziwei Ji\qquad{}\quad{}Matus Telgarsky\\
\texttt{<\{\href{mailto:yh46@illinois.edu}{yh46},\href{mailto:ziweiji2@illinois.edu}{ziweiji2},\href{mailto:mjt@illinois.edu}{mjt}\}@illinois.edu>}\\
University of Illinois, Urbana-Champaign}
\date{}
\begin{document}

\maketitle

\begin{abstract}
  We show that the simplest actor-critic method~---~a
  linear softmax policy updated with TD through interaction with a linear MDP,
  but featuring no explicit regularization or exploration~---~does
  not merely find an optimal policy,
  but moreover prefers high entropy optimal policies.
  To demonstrate the strength of this bias, the algorithm not only has
  no regularization, no projections, and no exploration like $\epsilon$-greedy,
  but is moreover trained on a single trajectory with no resets.
  The key consequence of the high entropy bias is that uniform mixing assumptions on the MDP,
  which exist in some form in all prior work, can be dropped: the implicit regularization
  of the high entropy bias
  is enough to ensure that all chains mix and an optimal policy is reached with high probability.
  As auxiliary contributions, this work decouples concerns between the actor and critic by
  writing the actor update as an explicit mirror descent, provides tools to uniformly bound
  mixing times within KL balls of policy space, and provides a projection-free TD analysis 
  with its own implicit bias which can be run from an unmixed starting distribution.
\end{abstract}

\section{Overview}\label{sec:overview}

Reinforcement learning methods navigate an environment and seek to maximize their reward
\citep{sutton2018reinforcement}.
A key tension is the tradeoff between \emph{exploration} and \emph{exploitation}:
does a learner (also called an \emph{agent} or \emph{policy}) \emph{explore} for new high-reward
states, or does it \emph{exploit} the best states it has already found?  This
is a sensitive part of RL algorithm design,
as it is easy for methods to become blind to parts of the state space;
to combat this, many methods have an explicit exploration component,
for instance
the \emph{$\epsilon$-greedy method}, which forces exploration in all states with probability
$\epsilon$ \citep{sutton2018reinforcement,tokic2010adaptive}.
Similarly, many methods must use projections and regularization
to smooth their estimates \citep{williams1991function,mnih2016asynchronous,cen2020fast}.

This work considers actor-critic methods, where a policy (or \emph{actor}) is updated via
the suggestions of a \emph{critic}.
In this setting, prior work invokes a combination of explicit regularization and exploration
to avoid getting stuck,
and makes various fast mixing assumptions to help accurate exploration.
For example,
recent work with a single trajectory in the tabular case used both an explicit $\eps$-greedy
component and uniform mixing assumptions
\citep{khodadadian2021finite},
neural actor-critic methods use a combination of projections and regularization
together with various assumptions on mixing and on the path followed through policy space
\citep{cai2019neural,wang2019neural},
and even direct analyses of the TD subroutine in our linear MDP setting
make use of both projection steps and an assumption of starting from the stationary distribution
\citep{russo_td}.

\textbf{Contribution.}
This work shows that a simple linear actor-critic (cf. \Cref{alg})
in a linear MDP (cf. \Cref{ass:linear:mdp})
with a finite but non-tabular state space (cf. \Cref{ass:mdp}) finds an $\eps$-optimal
policy in $\poly(1/\eps)$ samples, without any
explicit exploration or projections in the algorithm and without any
uniform mixing assumptions on the policy space (cf. \Cref{fact:main:3}).
The algorithm and analysis avoid both via an \emph{implicit bias} towards high
entropy policies:
the actor-critic policy path
never leaves a Kullback-Leibler (KL) divergence
ball of the maximum entropy optimal policy, and this firstly ensures implicit exploration,
and secondly ensures fast mixing.
In more detail:

\begin{enumerate}[font=\bfseries]
  \item
    \textbf{Actor analysis via mirror descent.}
    We write the actor update as an explicit mirror descent.  While on the surface this does
    not change the method (e.g., in the tabular case, the method is identical to natural
    policy gradient \citep{agarwal2021theory}), it gives a clean optimization guarantee
    which carries a KL-based implicit bias consequence for free, and decouples concerns
    between the actor and critic.

  \item
    \textbf{Critic analysis via projection-free sampling tools within KL balls.}
    The preceding mirror descent component
    guarantees that we stay within a small KL ball, \emph{if} the
    statistical error of the critic is controlled.  Concordantly, our sampling tools
    guarantee this statistical error is small,
    \emph{if} we stay within a small KL ball.  Concretely, we provide
    useful lemmas that every policy in a KL ball around the high entropy policy has uniformly
    upper bounded mixing times, and separately give a projection-free (implicitly regularized!)
    analysis of the
    standard \emph{temporal-difference (TD)} update
    from any starting state
    \citep{sutton_td},
    whereas the closest TD analysis in the literature uses projections
    and requires the sampling process to be started from the stationary distribution
    \citep{russo_td}.
    The mixing assumptions here contrast in general with prior work,
    which either makes explicit use of stationary distributions
    \citep{cai2019neural,wang2019neural,russo_td}, or makes uniform
    mixing assumptions on all policies \citep{xu2020non,khodadadian2021finite}.
\end{enumerate}

In addition to the preceding major contributions, the paper comes with many technical lemmas
(e.g., mixing time lemmas) which we hope are useful in other work.

\subsection{Setting and main results}

We will now give the setting, main result, and algorithm in full.  Further details on MDPs
can be found in \Cref{sec:notation}, but the actor-critic method appears in \Cref{alg}
on \cpageref{alg}.
To start, the environment and policies are as follows.

\begin{assumption}\label{ass:mdp}
  The \emph{Markov Decision Process (MDP)} has states $s\in\R^d$ and finitely many actions
  $a \in \cA := \{\ve_1,\ldots,\ve_k\}$, and finite rewards $r\in[0,1]$.  States are observed
  in some feature encoding $s\in\R^d$,
  but the state space $\cS\subseteq \{ s\in \R^d : \nicefrac 1 2 \leq \|s\|\leq 1\}$
  is assumed finite: $|\cS|<\infty$.

  Policies are \emph{linear softmax policies}: a policy
  is parameterized by a weight matrix $W\in\R^{d\times k}$,
  and given a state $s\in\R^d$, uses a per-state softmax to sample a new action $a$:
  \begin{equation}
    a \sim \phi(s^\T W\cdot), \qquad
    \text{where } \phi(s^\T W a) = \frac{\exp(s^\T W a)}{\sum_{b\in\cA}\exp(s^\T W b)}.
    \label{eq:softmax}
  \end{equation}

  Let $\cA_s$ denote the set of optimal actions for a given state $s$.
  It is assumed that $\cA_s$ is nonempty
  for every $s\in\cS$, and that there exists at least one optimal policy which
  is \emph{irreducible} \citep[Chapter 1]{peres_markov}.
\end{assumption}

\begin{algorithm}[t!]
  \caption{Single-trajectory linear actor-critic.}
  \label{alg}
  \begin{algorithmic}
    \STATE {\bfseries Inputs:} actor iterations $t$ and step size $\theta$;
      critic iterations $N$ and step size $\eta$.
    \STATE {\bfseries Initialize:} actor weights $W_0 = 0 \in \R^{d\times k}$,
    pre-softmax mapping $p_0(s,a) := s^\T W_0 a$, policy $\pi_0 := \phi(p_0)$
    (cf. \cref{eq:softmax});
    sample initial state/action/reward triple $(s_{0,0}, a_{0,0}, r_{0,0})$.
    \FOR{$i = 0,1,2,\ldots,t-1$}
      \STATE {\bfseries Critic update:}
      use $\pi_i$ to interact with the MDP, obtaining state/action/reward triples
      $(s_{i,j}, a_{i,j}, r_{i,j})_{j\leq N}$
      by continuing the existing trajectory,
      and form \emph{TD estimates}
      \[
              U_{i,j+1} := U_{i,j} -
              \eta
              s_{i,j}
\del{
s_{i,j}^\T U_{i,j}a_{i,j} - \gamma s_{i,j+1}^\T U_{i,j} a_{i,j+1}
                - r_{i,j}
              }a_{i,j}^\T
              ,
      \]
      with initial condition $U_{i,0} = 0$.
      Set $\hU_i := \frac 1 N \sum_{j<N} U_{i,j}$
      and $\hQ_i(s,a) := s^\T \hU_i a$,
      and also $(s_{i+1,0},a_{i+1,0},r_{i+1,0}) := (s_{i,N}, a_{i,N}, r_{i,N})$
      to continue the existing trajectory in next iteration.

      \STATE {\bfseries Actor update:}
        set $W_{i+1} := W_i + \theta \hU_i$ and $p_{i+1} := p_i + \theta \hQ_i$
        and $\pi_{i+1} := \phi(p_{i+1})$.
      \ENDFOR
  \end{algorithmic}
\end{algorithm}

The choice of linear policies simplifies presentation and analysis, but the tools here should
be applicable to other settings.
This choice also allows direct comparison to the
widely-studied implicit bias of gradient descent in linear
classification settings \citep{nati_logistic,min_norm},
as will be discussed further in \Cref{sec:related}.
The choice of finite state space is to remove measure-theoretic concerns and to
allow a simple characterization of the maximum entropy optimal policy.

\begin{lemma}[simplification of \Cref{fact:maxent:full}]\label{fact:maxent}
  Under \Cref{ass:mdp}, there exists a unique maximum entropy policy $\barpi$,
  which satisfies $\barpi(s,\cdot) = \textup{Uniform}(\cA_s)$ for every state $s$,
  and moreover has a stationary distribution $\frs{\barpi}$.
\end{lemma}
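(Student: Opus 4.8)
The plan is to reduce the statement to three elementary ingredients: a Bellman-optimality characterization of the set of optimal policies, strict concavity of Shannon entropy, and the fact that a finite irreducible Markov chain has a unique stationary distribution. Throughout, ``optimal'' is read in the pointwise sense $V^\pi = V^{*}$ on all of $\cS$ (the sense in which an optimal policy always exists), and $\cA_s = \{a\in\cA : Q^{*}(s,a)=V^{*}(s)\}$, with $Q^{*},V^{*}$ the optimal value functions. The first step is to show that a policy $\pi$ is optimal if and only if $\textup{supp}\,\pi(s,\cdot)\subseteq\cA_s$ for every $s\in\cS$: in the ``if'' direction, such a $\pi$ makes $V^{*}$ solve the contractive Bellman equation of $\pi$, whose unique solution is $V^\pi$, so $V^\pi=V^{*}$; in the ``only if'' direction, $V^\pi=V^{*}$ forces $Q^\pi=Q^{*}$ (both equal $r+\gamma PV^{*}$), hence $V^{*}(s)=\sum_a\pi(s,a)Q^{*}(s,a)\le\max_a Q^{*}(s,a)=V^{*}(s)$ with equality only when $\textup{supp}\,\pi(s,\cdot)\subseteq\cA_s$. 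Since each $\cA_s$ is nonempty by \Cref{ass:mdp}, the set of optimal policies is exactly the nonempty product of simplices $\prod_{s\in\cS}\Delta(\cA_s)$.

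Next I would read off $\barpi$ and its stationary distribution. The maximum-entropy objective is separable across states, $\pi\mapsto\sum_{s\in\cS}H(\pi(s,\cdot))$, and by the previous step so is the feasible set. Each summand is strictly concave on $\Delta(\cA_s)$ and uniquely maximized there by $\textup{Uniform}(\cA_s)$, so the objective is strictly concave on the finite product $\prod_s\Delta(\cA_s)$ and has the unique maximizer $\barpi$ with $\barpi(s,\cdot)=\textup{Uniform}(\cA_s)$; this gives existence, uniqueness, and the stated form. For the stationary distribution: by \Cref{ass:mdp} some optimal policy $\pi^{*}$ is irreducible, and by the previous step $\textup{supp}\,\pi^{*}(s,\cdot)\subseteq\cA_s=\textup{supp}\,\barpi(s,\cdot)$ for every $s$, so every one-step transition with positive probability under $\pi^{*}$ also has positive probability under $\barpi$; irreducibility therefore transfers from the $\pi^{*}$-chain to the $\barpi$-chain, and a finite irreducible chain has a unique stationary distribution, which we take as $\frs{\barpi}$ (finiteness of $\cS$ is used precisely here).

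I expect the only genuinely delicate point to be the support characterization: committing to the pointwise reading of ``optimal'' so that it holds with no reachability caveats, and correctly using $Q^\pi=Q^{*}$ whenever $V^\pi=V^{*}$. The remaining ingredients --- strict concavity of entropy, transfer of irreducibility via support inclusion, and the existence of a stationary distribution for a finite irreducible chain --- are routine.
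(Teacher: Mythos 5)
Your proof is correct, and it reaches the same endpoint as the paper by a somewhat different route. The paper establishes that $\barpi(s,\cdot)=\textup{Uniform}(\cA_s)$ is optimal by an incremental construction: it swaps in the uniform distribution one state at a time and uses the performance difference lemma to check that each swap leaves every value function unchanged; it then treats the identification of $\barpi$ as \emph{the} maximum entropy optimal policy as immediate from the definition of $\cA_s$. You instead first prove a complete characterization of the optimal set via the Bellman fixed-point argument --- a policy is optimal (in the pointwise sense) if and only if its support at each $s$ lies in $\cA_s$, so the optimal policies form exactly the product of simplices over the sets $\cA_s$ --- and then invoke strict concavity of the separable entropy objective to get existence, uniqueness, and the uniform form in one stroke. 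Your route buys a little more: it makes explicit both the uniqueness claim (which the paper asserts rather than argues) and the fact that \emph{every} optimal policy is supported in $\cA_s$, which is exactly what the irreducibility-transfer step needs and which the paper's proof of that step uses implicitly. Your reading of $\cA_s$ as $\{a:\barQ(s,a)=\barV(s)\}$ is consistent with the paper's ``actions taken by optimal policies'' precisely because of your support characterization. The stationary-distribution argument (support inclusion transfers irreducibility from the assumed irreducible optimal policy to $\barpi$, and a finite irreducible chain has a stationary distribution) is identical to the paper's.
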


To round out this introductory presentation of the actor, the last component is the update:
$p_{i+1} := p_i + \hQ_i$ and $\pi_{i+1} := \phi(p_{i+1})$,
where $\hQ_i$ is the \emph{TD estimate of the $Q$ function}, to be discussed shortly.
This update is explicitly a \emph{mirror descent}
or \emph{dual averaging} update of the policy, where we use a \emph{mirror mapping $\phi$}
to obtain the policy $\pi_{i+1}$ from pre-softmax values $p_{i+1}$.  As mentioned before, this
update appears in prior work in the tabular setting with natural policy gradient and
actor-critic
\citep{agarwal2021theory,khodadadian2021finite}, and will be related to other methods
in \Cref{sec:related}.  We will further motivate this update in \Cref{sec:md}.

The final assumption and description of the critic are as follows.  As will be discussed in
\Cref{sec:md}, the policy becomes optimal if $\hQ_i$
is an accurate estimate of the true $Q$ function.
We employ a standard TD update with no projections or constraints.
To guarantee that this linear model of $\cQ_i$ is accurate,
we make a standard \emph{linear MDP assumption} 
\citep{bradtke1996linear,melo2007q,jin2020provably}.

\begin{assumption}\label{ass:linear:mdp}
  In words,
  the \emph{linear MDP assumption} is that the MDP rewards and transitions are modeled by
  linear functions.  In more detail, for convenience first fix a canonical vector form for
  state/action pairs $(s,a)\in \R^{d\times k}$: let $x_{sa} \in\R^{dk}$ denote the vector
  obtained via unrolling the matrix $sa^\T$ row-wise (whereby vector inner products with $x_{sa}$
  match matrix inner products with $sa^\T$).
  The linear MDP assumption is then that there exists a fixed vector $y\in\R^{dk}$
  and a fixed matrix $M\in \R^{d\times dk}$ so that for any state/action pair $x_{sa}$ and
  any subsequent state $s'\in\R^d$,
  \[
    \bbE[ r \,  |  \, (s,a) ] = x_{sa}^\T y,
    \qquad
    \textup{and}
    \qquad
    \bbE[ s'  \, | \,  (s,a)] = Mx_{sa}.
\]
  Lastly, suppose $\nicefrac 1 2 \leq \|s\| \leq 1$ for all $s\in\cS$.
\end{assumption}

Though a strong assumption, it is not only common, but note also that since TD must continually
interact with the MDP, then it would have little hope of accuracy if it can not model
short-term MDP dynamics.
Indeed, as is shown in
\Cref{fact:linear_fixed_point} (but appears in various forms throughout the literature),
\Cref{ass:linear:mdp} implies that the fixed point of the TD update is the true $Q$ function.
This assumption and the closely related \emph{compatible linear function approximation}
assumption will be discussed in \Cref{sec:related}.

We now state our main result, which bounds not just the \emph{value function $\cV$}
(cf.  \Cref{sec:notation}) but also the KL divergence $K_{\frv{\barpi}^s}(\barpi,\pi_i)
= \bbE_{s' \sim \frv{\barpi}^s} \sum_a \barpi(s',a) \ln \frac{\barpi(s',a)}{\pi_i(s',a)}$,
where $\frv{\barpi}^s$ is the \emph{visitation distribution} of the maximum entropy
optimal policy $\barpi$ when run from state $s$ (cf. \Cref{sec:notation}).

\begin{theorem}\label{fact:main:3}
  Suppose \Cref{ass:mdp,ass:linear:mdp} (which imply the (unique) maximum entropy
  optimal policy $\barpi$ is well-defined and has a stationary distribution).
Given iteration budget $t$, choose
  \[
\theta = \Theta\del{ \frac {1}{t^{13/16} \ln(t)^{1/4}} },
    \qquad
    N =
\Theta \del{ t^2 \ln t },
    \qquad
\eta = \Theta \del{ \frac 1 {\sqrt{N \ln N} }},
  \]
where the constants hidden inside each $\Theta(\cdot)$
  depend only on $\barpi$ and the MDP, but not on $t$.
  With these parameters in place, invoke \Cref{alg}, and let $(\pi_i)_{i<t}$ be the resulting
  sequence of policies.
  Then with probability at least $1-1/t^{1/8}$,
  simultaneously for every state $s\in\R^d$ and every $i\leq t$,
  \[
    K_{\frv{\barpi}^{s}}(\barpi,\pi_i)
    +
    \theta (1-\gamma) \sum_{j<i} \del{\cV_\barpi(s) - \cV_{j}(s)}
    \leq
    \ln k + \frac {1}{(1-\gamma)^2}.
  \]
\end{theorem}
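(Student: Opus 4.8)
The plan is to run the mirror-descent potential argument on the actor and feed it the critic's statistical guarantees, with an induction on $i$ that simultaneously maintains the KL bound and the validity of the critic's error estimates. Concretely, fix the comparator $\barpi$ and define the per-state potential $\Phi_i(s) := K_{\frv{\barpi}^s}(\barpi,\pi_i)$. Because the actor update $p_{i+1} = p_i + \theta\hQ_i$ with $\pi_{i+1} = \phi(p_{i+1})$ is exactly mirror descent / dual averaging with the negative-entropy mirror map, the standard one-step mirror-descent inequality gives, after averaging over $s' \sim \frv{\barpi}^s$ and using the performance-difference lemma to convert the inner product $\langle \hQ_i, \barpi - \pi_i\rangle$ into a value gap,
\[
  \Phi_{i+1}(s) \le \Phi_i(s) - \theta(1-\gamma)\del{\cV_\barpi(s) - \cV_i(s)} + \theta\,\mathrm{err}_i + \tfrac{\theta^2}{2}\|\hQ_i\|_\infty^2,
\]
where $\mathrm{err}_i$ collects the critic approximation error $\langle \hQ_i - \cQ_i,\, \barpi - \pi_i\rangle$ plus any advantage-vs-$Q$ bookkeeping. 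Telescoping this over $j < i$ yields exactly the claimed left-hand side, $\Phi_i(s) + \theta(1-\gamma)\sum_{j<i}(\cV_\barpi(s) - \cV_j(s))$, bounded by $\Phi_0(s) + \theta\sum_{j<i}\mathrm{err}_j + \tfrac{\theta^2}{2}\sum_{j<i}\|\hQ_j\|_\infty^2$; since $W_0 = 0$ the initial potential is at most $\ln k$, and the residual $\tfrac{1}{(1-\gamma)^2}$ term must absorb the accumulated critic error and the quadratic step-size terms.

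The induction is the crux: to bound $\mathrm{err}_j$ via the projection-free TD analysis, I need that $\pi_j$ lies in a KL ball around $\barpi$ so that the mixing-time lemmas apply and the single-trajectory TD iterates concentrate; but that KL-ball membership is precisely what the potential bound is establishing. So I would set up a "good event" on which, for every $j < i$, the TD estimate satisfies $\|\hQ_j - \cQ_j\|_\infty \le \mathrm{stat}(N,\eta)$ and $\|\hU_j\|$ (hence $\|\hQ_j\|_\infty$) is bounded — this uses the projection-free TD analysis from the excerpt applied to the trajectory segment of length $N$ generated by $\pi_j$, together with the uniform-mixing-within-KL-balls lemma. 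Conditioned on the good event holding up through step $j$, the telescoped inequality forces $\Phi_i(s) \le \ln k + \tfrac{1}{(1-\gamma)^2}$ for all $i$ up to and including the current one, which keeps every $\pi_i$ inside the KL ball, which in turn validates the hypotheses of the TD lemma at the next step. A union bound over the $t$ critic calls (and over the finitely many states, or rather a supremum handled by the uniform-in-$s$ nature of the mixing bound) gives failure probability $t \cdot \mathrm{poly}(t) \cdot e^{-\Omega(N\eta^2/\ldots)}$, and the parameter choices $N = \Theta(t^2\ln t)$, $\eta = \Theta(1/\sqrt{N\ln N})$, $\theta = \Theta(t^{-13/16}\ln(t)^{-1/4})$ are calibrated so this is at most $1/t^{1/8}$ while simultaneously $\theta\sum_{j<t}\mathrm{stat} + \theta^2 t \cdot O((1-\gamma)^{-2})$ stays below the slack $\tfrac{1}{(1-\gamma)^2}$.

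The main obstacle I anticipate is the circularity between the TD concentration bound and KL-ball membership, and in particular propagating it cleanly across iterations on a \emph{single non-reset trajectory}: the starting distribution of segment $j$ is whatever state the previous segment ended in, not the stationary distribution of $\pi_j$, so the TD analysis must tolerate an arbitrary (unmixed) start — this is exactly why the excerpt advertises a "projection-free TD analysis which can be run from an unmixed starting distribution," and I would lean on that lemma, paying a burn-in cost of order the mixing time (uniformly bounded over the KL ball) inside $\mathrm{stat}(N,\eta)$. A secondary technical point is controlling $\|\hU_j\|$ without projections: the TD iterates could in principle blow up, so I would need the implicit-regularization feature of the projection-free TD analysis to give an a priori norm bound on $\hU_j$ that holds on the same good event, and then verify that the resulting $\|\hQ_j\|_\infty$ bound makes the $\theta^2\sum_j\|\hQ_j\|_\infty^2$ term negligible relative to $(1-\gamma)^{-2}$ under the stated $\theta$. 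Everything else — the performance-difference / mirror-descent algebra and the final arithmetic matching the $\Theta(\cdot)$ exponents — is routine bookkeeping once these two points are in hand.
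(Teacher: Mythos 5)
Your overall architecture matches the paper's: instantiate the uniform mixing constants once, in advance, on the KL ball of radius $\ln k + 1/(1-\gamma)^2$, then run an induction in which the telescoped mirror-descent potential keeps every $\pi_i$ inside that ball, which in turn validates the TD lemma at the next step. The gaps are in the probabilistic bookkeeping for the accumulated critic error. You posit a ``good event'' on which $\|\hQ_j - \cQ_j\|_\infty \le \mathrm{stat}(N,\eta)$ simultaneously for all $j$, obtained by a union bound with failure probability of the form $t\cdot e^{-\Omega(N\eta^2)}$. No such exponential concentration is available: the projection-free TD guarantee (\Cref{fact:td:simplified}) controls only the \emph{second moment} of the error, and proving sub-Gaussian tails for an unprojected, single-trajectory, Markovian TD average would be a substantial additional result. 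The paper instead applies Markov's inequality per iteration (paying a $1/\delta$ factor on those iterations) and then uses Azuma's inequality on the Bernoulli indicators of the events $\{\veps_j > \sqrt{t}\,\bbE\veps_j\}$ to show that only $O(\sqrt{t\ln t})$ iterations can be ``bad''; this split is what yields $\sum_{j<t}\veps_j = O(t^{13/8}\sqrt{\ln t}/(1-\gamma)^2)$ and hence fixes the otherwise mysterious exponent in $\theta = \Theta(t^{-13/16}\ln(t)^{-1/4})$. Without a replacement for this step your union bound does not close.

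A second, related gap: the sup-norm part of the TD guarantee, $\bbE\|\widehat U_i - \bar U_i\|^2 \le 54/(1-\gamma)^2$, does \emph{not} decay with $N$; only the stationary-distribution-weighted squared prediction error decays (at rate $1/(\eta N)$). So a cumulative error bound of the form $\theta\sum_{j<t}\|\hQ_j-\cQ_j\|_\infty$ is of order $\theta t/(1-\gamma)$ and is not small enough to be absorbed by the $1/(1-\gamma)^2$ slack. The paper bounds $\sum_j\langle \cQ_j-\hQ_j,\pi_j-\barpi\rangle_{\frv{\barpi}^s}$ by first changing measure from $\frv{\barpi}^s$ to $\frs{\pi_j}$ (using the uniform lower bound $p_{\min}$ on stationary masses and the ratio constant $C_2$ supplied by \Cref{fact:mixing} on the KL ball) and then applying Cauchy--Schwarz against the $\eta N$-weighted stationary error. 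Your sketch needs this change of measure; the non-decaying sup-norm control suffices only for the $\theta^2\sum_j\sup_{s,a}\hQ_j(s,a)^2$ term, where a constant per-iteration bound is enough.
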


Before outlining the proof structure and organization of the rest of the paper, a few comments
on the interpretation of \Cref{fact:main:3} are as follows.

\begin{remark}[Discussion of \Cref{fact:main:3}]\phantom{ }
  \begin{enumerate}
    \item \textbf{Implicit bias.}  Since $\barpi$ is optimal, the second term can be
      deleted, and the bound implies
      \[
          \max_{\substack{i \leq t\\s\in\cS}}
K_{\frv{\barpi}^{s}}(\barpi,\pi_i)
\leq \ln k + \frac {1}{(1-\gamma)^2};
      \]
      since this holds for all $i\leq t$, it controls the optimization path.  This term
      is a direct consequence of our mirror descent setup, and is used to control the TD errors
      at every iteration.
      This implicit bias of the policy path stands therefore in stark contrast to the
      worst-case KL divergence between \emph{arbitrary} softmax policies
      and $\barpi$,
      which is infinite:
      e.g.,
      a sequence of policies $(\pi_i)_{i\geq 1}$ which place vanishing probability on a
      pair $(s,a)$ which in turn receives positive probability under $\barpi$
      will have
      $K_{\frv{\barpi}^{s}}(\barpi, \pi_i) \to \infty$.

    \item
      \textbf{Mixing time constants.}
      The critic iterations $N$ and step size $\eta$ hide mixing time constants;
      these mixing time constants
      depend only on the KL bound $\ln k + 1/(1-\gamma)^2$, and in particular there is no
      hidden growth in these terms with $t$.  That is to say, mixing times are uniformly controlled
      over a fixed KL ball that does not depend on $t$; prior work by contrast makes
      strong mixing assumptions \citep{wang2019neural,xu2020non,khodadadian2021finite}.

    \item
      \textbf{High probability guarantee.}  Though prior work focuses on bounds in expectation,
      we chose a high probability guarantee to emphasize that the bound does not blow up,
      despite an arguably more strenuous setting.

    \item
      \textbf{Single trajectory.}  A single trajectory through the MDP is used to remove
      the option of the algorithm escaping from poor choices with resets; only the implicit bias
      can save it.

    \item
      \textbf{Rate.}
      To reach a policy which whose value function is $\eps$-close to optimal,
      a trajectory length (number of samples) of
      $1/\eps^{16}$ is sufficient, ignoring log factors (to obtain this from
      \Cref{fact:main:3}, it suffices to divide both sides of
      the bound by $t\theta(1-\gamma)$, set $t = 1/\eps^{16/3}$, and note the trajectory
      length is $tN$).
      This is slower than the $1/\eps^{6}$ given in the
      only other single-trajectory analysis
      in the literature \citet{khodadadian2021finite},
      but by contrast that work
      makes uniform mixing assumptions (cf. \citet[Lemma C.1]{khodadadian2021finite}),
      requires the tabular setting,
      and uses $\eps$-greedy for explicit exploration in each iteration.
      \qedhere
  \end{enumerate}
\end{remark}

The proof of \Cref{fact:main:3} and organization of the paper are as follows.
After overviews of related work and notation in \Cref{sec:related,sec:notation},
the first proof component, discussed in \Cref{sec:md},
is the outer loop of \Cref{alg}, namely the update to the policy $\pi_i$.
As discussed before,
this part of the analysis writes the policy update as a mirror descent,
and conveniently decouples the suboptimality error into an \emph{actor error},
which is handled by standard mirror descent tools and provides the implicit bias towards
high entropy policies,
and a \emph{critic error},
namely the error of the estimated $Q$ function $\hQ_i$.
The second component, discussed in \Cref{sec:sampling}, is therefore the TD analysis establishing
that the estimate $\hQ_i$ is accurate,
which not only requires an abstract TD guarantee (which, as mentioned, is projection-free
and run from an arbitrary starting state, unlike prior work),
but also requires tools to explicitly
bound mixing times, rather than assuming mixing times are bounded.

This culminates in the proof of \Cref{fact:main:3}, which is sketched at the end of
\Cref{sec:sampling}, and uses an induction combining the guarantees from the preceding two
proof components at all times:
it is established inductively that the next policy $\pi_{i+1}$
has high entropy and low suboptimality
because the previous $Q$ function estimates $(\hQ_j)_{j\leq i}$ were accurate,
and simultaneously that the next estimate $\hQ_{i+1}$ is accurate because
$\pi_{i+1}$ has high entropy, which guarantees fast mixing.
\Cref{sec:open} concludes with some discussion and open problems,
and the appendices contain the full proofs.

\subsection{Further related work}\label{sec:related}

For the standard background in reinforcement learning, see
the book by \citet{sutton2018reinforcement}.  Standard concepts and notation choices are
presented below in \Cref{sec:notation}.

\paragraph{Standard algorithms: PG/NPG and AC/NAC.}
The \emph{[natural] policy gradient ([N]PG)} and
\emph{natural actor-critic ([N]AC)} are widely used in practice, and summarized briefly as follows.
Policy gradient methods update the actor parameters $W$ with gradient ascent
on the value function $\cV_\pi(\mu)$ for some state distribution $\mu$
\citep{williams1992simple,sutton2000policy,bagnell2003covariant,liu2020improved,fazel2018global},
whereas natural policy gradient multiplies $\nabla_W \cV_{\pi}(\mu)$ by an
\emph{inverse Fisher matrix}
with the goal of improved convergence via a more relevant geometry
\citep{kakade2001natural,agarwal2021theory}.  What policy gradient leaves open is how to
estimate $\nabla_W\cV_\pi(\mu)$; actor-critic methods go one step further and suggest
updating the actor with policy gradient as above, but noting that $\nabla_W\cV_\pi(\mu)$ 
can be written as a function of $\cQ_\pi$, or rather an estimate thereof,
and making this estimation the
job of a separate subroutine, called the \emph{critic} \citet{konda_ac}.
(\emph{Natural} actor-critic uses \emph{natural} policy gradient in the actor update
\citep{peters2008natural}.)
Actor-critic methods are perhaps the most widely-used instances of policy gradient,
and come in many forms; the use of TD for the critic step is common
\citep{williams1992simple,sutton2000policy,bagnell2003covariant,liu2020improved,fazel2018global}.

\paragraph{Linear MDPs and compatible function approximation.}
The linear MDP assumption (cf. \Cref{ass:linear:mdp}) is used here to ensure that the TD
step accurately estimates the true $Q$ function, and is a somewhat common assumption in the
literature, even when TD is not used
\citep{bradtke1996linear,melo2007q,jin2020provably}.
As an example, the \emph{tabular} setting satisfies \Cref{ass:linear:mdp},
simply by encoding states as distinct standard basis vectors, namely
$\cS := \{ \ve_1, \ldots, \ve_{|\cS|}\}$ \citep[Example 2.1]{jin2020provably};
moreover, in this tabular setting, the actor update of \Cref{alg} agrees with
NPG \citep{agarwal2021theory}.
Interestingly, another common assumption,
\emph{compatible linear function approximation},
also guarantees our analysis goes through and that \Cref{alg} agrees with NPG,
while being non-tabular in general.
In detail, the compatible function approximation setting firstly requires
that the actor update agrees with $\hQ_i$ in a certain sense (which holds in our setting
by construction),
and secondly that there exists a choice of critic parameters
$U$ so that the exact $Q$ function $\cQ_\pi$ can be represented with these parameters
\citep{silver__lec7}.
If this assumption holds \emph{for every policy $\pi_i$ (and corresponding
$\cQ_i$) encountered in the algorithm},
then the policy update of \Cref{alg} agrees with NPG and NAC
(this is a standard fact; see for instance \citet{silver__lec7},
or \citet[Lemma 13.1]{nan__rl_book}).
Additionally, the proofs here also go through under this arguably weaker assumption:
\Cref{ass:linear:mdp} is only used to ensure that TD finds not just a fixed point
but the true $Q$ function (cf. \Cref{fact:td}), which is also guaranteed by the compatibility
assumption.
However, since this is an assumption
on the trajectory and thus harder to interpret, we prefer \Cref{ass:linear:mdp} which explicitly
holds for all possible policies.

\paragraph{Regularization and constraints.}
It is standard with neural policies to explicitly maintain a constraint on the network
weights \citep{wang2019neural,cai2019neural}.
Relatedly, many works both in theory and practice use explicit entropy regularization
to prevent small probabilities
\citep{williams1991function,mnih2016asynchronous,abdolmaleki2018maximum}, and
which can seem to yield convergence rate improvements \citep{cen2020fast}.

\paragraph{NPG and mirror descent.}
(For background on mirror descent, see \Cref{sec:md,sec:md:app}.)
The original and recent analyses of NPG had a mirror descent flavor, though
mirror descent and its analysis were not explicitly invoked
\citep{kakade2001natural,agarwal2021theory}.
Further connections to mirror descent have appeared many times
\citep{geist2019theory, shani2020adaptive},
though with a focus on the design of new algorithms, and not for any implicit
regularization effect or proof.
Mirror descent is used heavily throughout the online learning literature \citep{shalev_online},
and in work handling adversarial MDP settings \citep{neu_oreps}.

\paragraph{Temporal-difference update (TD).}
As discussed before, the TD update, originally presented by \citep{sutton_td},
is standard in the actor-critic literature \citep{cai2019neural,wang2019neural},
and also appears in many other works cited in this section.
As was mentioned, prior work requires various projections
and initial state assumptions \citep{russo_td},
or positive eigenvalue assumptions \citep{zou2019finite,srikant_td,russo_td}.

\paragraph{Implicit regularization in supervised learning.}
A pervasive topic in supervised learning is the \emph{implicit regularization} effect of common
descent methods; concretely, standard descent methods prefer low or even minimum norm solutions,
which can be converted into generalization bounds.
The present work makes use of a \emph{weak} implicit bias, which only prefers smaller norms
and does not necessarily lead to minimal norms; arguably this idea was used in the
classical perceptron method \citep{novikoff},
but was then shown in linear and shallow network cases of SGD applied to logistic regression
\citep{min_norm,ziwei_polylog},
which was then generalized to other losses \citep{shamir2020gradient},
and also applied to other settings \citep{chen_much}.
The more well-known \emph{strong} implicit bias, namely the convergence to minimum norm
solutions, has been observed with exponentially-tailed losses
together with coordinate descent with linear predictors \citep{zhang_yu_boosting,mjt_margins},
gradient descent with linear predictors \citep{nati_logistic,min_norm},
and deep learning in various settings \citep{kaifeng_jian_margin,chizat_bach_imp},
just to name a few.

\subsection{Notation}\label{sec:notation}

This brief notation section summarizes various concepts and notation used throughout;
modulo a few inventions, the presentation mostly matches standard ones in RL
\citep{sutton2018reinforcement} and policy gradient \citep{agarwal2021theory}.
A policy $\pi : \R^d \times \R^k \to \R$ maps state-action pairs to reals, and $\pi(s,\cdot)$
will always be a probability distribution.  Given a state, the agent samples an action from
$a\sim \pi(s,\cdot)$, the environment returns some random reward (which has a fixed distribution
conditioned on the observed $(s,a)$ pair), and then uses a \emph{transition kernel} to choose
a new state given $(s,a)$.

Taking $\tau$ to denote a random trajectory followed by a policy $\pi$ interacting with
the MDP from an arbitrary initial state distribution $\mu$,
the \emph{value} $\cV$ and
\emph{$Q$ functions} are respectively
\begin{align*}
  \cV_\pi(\mu) &:= \mathop{\bbE}_{\substack{s_0\sim\mu\\\tau = s_0,a_0,r_0,s_1,\cdots}}
  \sum_{t\geq 0} \gamma^t r_t,
  \\
  \qquad
  \cQ_\pi(s,a) &:= \mathop{\bbE}_{\substack{s_0 = s, a_0 = a\\\tau = s_0,a_0,r_0,s_1,\cdots}}
  \sum_{t\geq 0} \gamma^t r_t
  = \bbE_{r_0\sim(s,a)}\del{r_0 + \gamma \bbE_{s_1\sim (s,a)}\cV_\pi(s_1)},
\end{align*}
where the simplified notation $\cV_\pi(s) = \cV_\pi(\delta_s)$ for Dirac distribution $\delta_s$
on state $s$ will often be used,
as well as the shorthand $\cV_i = \cV_{\pi_i}$ and $\cQ_i = \cQ_{\pi_i}$.
Additionally, let $\cA_\pi(s,a) := \cQ_\pi(s,a) - \cV_\pi(s)$ denote the \emph{advantage
function}; note that the natural policy gradient update could interchangeably use
$\cA_i$ or $\cQ_i$ since they only differ by an action-independent constant,
namely $\cV_\pi(s)$,
which the softmax normalizes out.
As in \Cref{ass:mdp}, the state space $\cS$ is finite but a subset of $\R^d$,
specifically $\cS \subseteq \{ s \in \R^d : \nicefrac 1 2\leq \|s\|\leq 1\}$, and the action space $\cA$
is just the $k$ standard basis vectors $\{\ve_1,\ldots,\ve_k\}$.
The other MDP assumption, namely of a \emph{linear MDP} (cf. \Cref{ass:linear:mdp}), will
be used whenever TD guarantees are needed.
Lastly, the \emph{discount factor} $\gamma\in (0,1)$ has not been highlighted,
but is standard in the RL literature,
and will be treated as given and fixed throughout the present work.

A common tool in RL is the \emph{performance difference lemma} \citep{Kakade2002ApproximatelyOA}:
letting $\frv{\pi}^{\mu}$ denote the \emph{visitation distribution} corresponding to policy
$\pi$ starting from $\mu$, meaning
\[
  \frv{\pi}^{\mu} :=\frac{1}{1-\gamma} \bbE_{s'\sim \mu}
  \sum_{t\geq 0} \gamma^t \Pr[ s_t = s | s_0 = s'],
\]
the performance difference lemma can be written as
\begin{equation}
  \cV_\pi(\mu) - \cV_{\pi'}(\mu)
  = \frac 1 {1-\gamma} \bbE_{s\sim\frv{\pi'}^\mu} \sum_a \cQ_\pi(s,a)(\pi(s,a) - \pi'(s,a))
  =: \frac 1 {1-\gamma} \ip{\cQ_\pi}{\pi - \pi'}_{\frv{\pi'}^\mu},
  \label{eq:perfdiff}
\end{equation}
where the final inner product notation will often be employed for convenience.

In a few places, we need the Markov chain \emph{on states}, $P_\pi$,
which is induced by a policy $\pi$:
that is, the chain where given a state $s$, we sample $a \sim \pi(s,\cdot)$, and then
transition to $s' \sim (s,a)$,
where the latter sampling is via the MDP's transition kernel.

As mentioned above, $\frs{\pi}$ will denote the stationary distribution of a policy $\pi$
whenever it exists.  The only relevant assumption we make here,
namely \Cref{ass:mdp},
is that the maximum entropy
optimal policy $\barpi$ is aperiodic and irreducible, which implies it has a stationary distribution
with positive mass on every state \citep[Chapter 1]{peres_markov}.
Via \Cref{fact:mixing}, it follows that all policies in a KL ball around
$\barpi$ also have stationary distributions with positive mass on every state.

The max entropy optimal policy $\barpi$ is complemented by a (unique) optimal $Q$ function
$\barQ$ and optimal advantage function $\barA$.  The optimal $Q$ function $\barQ$ dominates
all other $Q$ functions, meaning $\barQ(s,a) \geq \cQ_\pi(s,a)$ for any policy $\pi$;
for details and a proof, see \Cref{fact:maxent:full}.

We use $\|\mu-\nu\|_{\tv} = \sup_{U\subseteq \cS}|\mu(U) - \nu(U)|$ to denote
the \emph{total variation distance}, which is pervasive in mixing time
analyses \citep{peres_markov}.

\section{Mirror descent tools}\label{sec:md}

To see how nicely mirror descent and its guarantees fit with the NPG/NAC setup, first recall
our updates: $p_{i+1} := p_i + \hQ_i$, and $\pi_{i+1} := \phi(p_{i+1})$ (e.g., matching NPG in the
tabular case \citep{kakade2001natural,agarwal2021theory}).  In the online learning
literature \citep{shalev_online,lattimore_szepesvari_2020},
the basic mirror ascent (or \emph{dual averaging}) guarantee
is of the form
\[
  \sum_{i<t} \ip{\hQ_i}{\pi - \pi_i} = \cO\del{ \sqrt{t} },
\]
where notably $\hQ_i$ can be an arbitrary matrix.
The most common results are stated when $\hQ_i$ is the gradient of some convex
function, but here instead we can use the performance difference lemma
(cf. \cref{eq:perfdiff}):
recalling the inner product and visitation distribution notation from \Cref{sec:notation},
\begin{align*}
  \ip{\hQ_i}{\pi_i - \pi}_{\frv{\pi}^\mu}
  &=
  \ip{\cQ_i}{\pi_i - \pi}_{\frv{\pi}^\mu}
  +
  \ip{\hQ_i -  \cQ_i}{\pi_i - \pi}_{\frv{\pi}^\mu}
  \\
  &=
  (1-\gamma) \del{\cV_i(\mu) - \cV_\pi(\mu)}
  +
  \ip{\hQ_i -  \cQ_i}{\pi_i - \pi}_{\frv{\pi}^\mu}.
\end{align*}
The term $\hQ_i - \cQ_i$ is exactly what we will control with the TD analysis,
and thus the mirror descent approach has neatly decoupled concerns into an actor
term, and a critic term.

In order to apply the mirror descent framework, we need to choose a \emph{mirror mapping}.
Rather than using $\phi$, for technical reasons we bake the measure $\frv{\pi}^\mu$
into the mirror mapping and corresponding dual objects (cf. \Cref{sec:md:app} and
the proof of \Cref{fact:md:policy}).
This may seem strange, but it does not
change the induced policy (it scales the dual object \emph{for each state} by a constant),
and thus is a degree of freedom, and allows us to state
guarantees for \emph{all} possible starting distributions for free.

Our full mirror descent setup is detailed in \Cref{sec:md:app}, but culminates in the
following guarantee.

\begin{lemma}\label{fact:md:policy}
  Consider step size $\theta > 0$,
  any reference policy $\pi$,
  any starting measure $\mu$,
  and
  two treatments of the error $\hQ_i - \cQ_i$.
  \begin{enumerate}[font=\bfseries]
    \item
      \textbf{(Simplified bound.)}
Define $C_i := \sup_{s,a} |\hQ_i(s,a)|$ for all $i<t$.
      Then \begin{align*}
        K_{\frv{\pi}^{\mu}}(\pi,\pi_t)
        +
        \theta (1-\gamma) \sum_{i<t} \del{\cV_\pi(\mu) - \cV_i(\mu)  }
        &\leq
        K_{\frv{\pi}^{\mu}}(\pi,\pi_0)
        +
\theta^2 \sum_{i<t}C_i^2
        \\
        &\qquad
        +
        \theta \sum_{i<t} \ip{\hQ_i - \cQ_i}{\pi_i - \pi}_{\frv{\pi}^{\mu}}
        .
      \end{align*}

    \item
      \textbf{(Refined bound.)}
      Define $\hepso{i} := \sup_{s,a} |\hQ_i(s,a) - \cQ_i(s,a)|$.
Then
      \begin{align*}
        K_{\frv{\pi}^{\mu}}(\pi,\pi_t)
        +
        \theta (1-\gamma) \sum_{i<t} \del{\cV_\pi(\mu) - \cV_i(\mu)  }
        &\leq
        K_{\frv{\pi}^{\mu}}(\pi,\pi_0)
        +
        \frac {\theta}{1-\gamma}
        \\
        &\qquad
        +
        \theta\sum_{i<t} \del{ \frac {2\gamma \hepso{i}}{1-\gamma} + \hepso{i} + \hepso{i+1}}
        ,
      \end{align*}
      and additionally $\cV_i$ and $\hQ_i$ are approximately monotone:
      for any state $s$ and action $a$,
      \begin{align*}
        \cV_{i+1}(s) \geq \cV_{i}(s) - \frac {2\hepso{i}}{1-\gamma}
        \qquad
        \text{and}
        \qquad
        \hQ_{i+1}(s,a) \geq \hQ_{i}(s,a) - \frac {2\gamma\hepso{i}}{1-\gamma}
        -\hepso{i} - \hepso{i+1}.
      \end{align*}
      
  \end{enumerate}
\end{lemma}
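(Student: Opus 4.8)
The plan is to read the actor update $p_{i+1}:=p_i+\theta\hQ_i$, $\pi_{i+1}:=\phi(p_{i+1})$ of \Cref{alg} as \emph{unregularized dual averaging} on each per-state probability simplex with the negative-entropy mirror map, run simultaneously at every state and then averaged against $\frv{\pi}^{\mu}$; baking the measure into the mirror map (as in \Cref{sec:md:app}) is a per-state rescaling that leaves the induced policy unchanged but lets $\mu$ be arbitrary. Writing $Z_i(s):=\sum_b\exp(p_i(s,b))$, and letting $K_s(\cdot,\cdot)$ and $\ip{\cdot}{\cdot}_s$ be the $\delta_s$ (single-state) versions of $K_{\frv{\pi}^{\mu}}$ and $\ip{\cdot}{\cdot}_{\frv{\pi}^{\mu}}$, two identities follow directly from \cref{eq:softmax}: since $\ln\pi_{i+1}(s,a)-\ln\pi_i(s,a)=\theta\hQ_i(s,a)-\ln\tfrac{Z_{i+1}(s)}{Z_i(s)}$, taking $\bbE_{a\sim\pi(s,\cdot)}$ gives $K_s(\pi,\pi_{i+1})-K_s(\pi,\pi_i)=-\theta\ip{\hQ_i}{\pi}_s+\ln\tfrac{Z_{i+1}(s)}{Z_i(s)}$, and taking $\bbE_{a\sim\pi_{i+1}(s,\cdot)}$ gives $\ln\tfrac{Z_{i+1}(s)}{Z_i(s)}=\theta\ip{\hQ_i}{\pi_{i+1}}_s-K_s(\pi_{i+1},\pi_i)$. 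The common skeleton is: sum the first identity over $i<t$, integrate over $s\sim\frv{\pi}^{\mu}$, write $\hQ_i=\cQ_i+(\hQ_i-\cQ_i)$, and invoke the performance difference lemma \cref{eq:perfdiff} in the form $\ip{\cQ_i}{\pi_i-\pi}_{\frv{\pi}^{\mu}}=(1-\gamma)(\cV_i(\mu)-\cV_\pi(\mu))$; this already produces $K_{\frv{\pi}^{\mu}}(\pi,\pi_t)$, the suboptimality sum, and the critic-error inner products, so everything reduces to controlling the ``overshoot'' $\ln\tfrac{Z_{i+1}(s)}{Z_i(s)}-\theta\ip{\hQ_i}{\pi_i}_s$. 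For part~1 this overshoot is at most $\theta^2C_i^2$ by Hoeffding's lemma applied to the log-MGF of the $[-\theta C_i,\theta C_i]$-bounded variable $\theta\hQ_i(s,a)$, $a\sim\pi_i(s,\cdot)$; collecting and rearranging then gives the simplified bound with no further ingredients.

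For part~2 I would first prove the two approximate-monotonicity claims, since they feed the refined inequality. The $\cV$ claim is exact policy improvement plus a perturbation: at any state $s'$, $\ip{\hQ_i}{\pi_{i+1}-\pi_i}_{s'}\ge0$ because $\pi_{i+1}(s',\cdot)\propto\pi_i(s',\cdot)\exp(\theta\hQ_i(s',\cdot))$ reweights toward larger $\hQ_i$, so $\hQ_i$ and $\exp(\theta\hQ_i)$ are positively correlated (Chebyshev's sum inequality); hence $\ip{\cQ_i}{\pi_{i+1}-\pi_i}_{s'}\ge-2\hepso{i}$, and \cref{eq:perfdiff} with visitation $\frv{\pi_{i+1}}^{s}$ upgrades this to $\cV_{i+1}(s)\ge\cV_i(s)-\tfrac{2\hepso{i}}{1-\gamma}$. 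Substituting into $\cQ_{i+1}(s,a)=\bbE_{r_0\sim(s,a)}[r_0+\gamma\bbE_{s_1\sim(s,a)}\cV_{i+1}(s_1)]$ gives $\cQ_{i+1}\ge\cQ_i-\tfrac{2\gamma\hepso{i}}{1-\gamma}$ pointwise, and adding $|\hQ_j-\cQ_j|\le\hepso{j}$ yields $\hQ_{i+1}\ge\hQ_i-\tfrac{2\gamma\hepso{i}}{1-\gamma}-\hepso{i}-\hepso{i+1}$. For the refined inequality itself, use the \emph{exact} overshoot form $\theta\ip{\hQ_i}{\pi_{i+1}-\pi_i}_s-K_s(\pi_{i+1},\pi_i)\le\theta\ip{\hQ_i}{\pi_{i+1}-\pi_i}_s$ and bound the sum by Abel summation:
\[ \sum_{i<t}\ip{\hQ_i}{\pi_{i+1}-\pi_i}_s=\ip{\hQ_{t-1}}{\pi_t}_s-\ip{\hQ_0}{\pi_0}_s-\sum_{j=1}^{t-1}\ip{\hQ_j-\hQ_{j-1}}{\pi_j}_s, \]
where each $\ip{\hQ_j-\hQ_{j-1}}{\pi_j}_s\ge-\bigl(\tfrac{2\gamma\hepso{j-1}}{1-\gamma}+\hepso{j-1}+\hepso{j}\bigr)$ by the monotonicity just proved, $\ip{\hQ_{t-1}}{\pi_t}_s\le\max_a\cQ_{t-1}(s,a)+\hepso{t-1}\le\barV(s)+\hepso{t-1}$ (since the optimal $Q$ dominates every $\cQ_i$ pointwise and $\barQ(s,\cdot)\le\barV(s)$), and $-\ip{\hQ_0}{\pi_0}_s\le-\cV_0(s)+\hepso{0}\le\hepso{0}$. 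Integrating over $\frv{\pi}^{\mu}$ and using $\barV(s)\le\tfrac1{1-\gamma}$ bounds $\theta\sum_i\ip{\hQ_i}{\pi_{i+1}-\pi_i}_{\frv{\pi}^{\mu}}$ by $\tfrac{\theta}{1-\gamma}$ plus a sum of $\hepso{}$ terms; folding in the per-step critic error (also $\le2\hepso{i}$ after Hölder against $\|\pi_i(s,\cdot)-\pi(s,\cdot)\|_1\le2$) and collecting everything into $\theta\sum_{i<t}\bigl(\tfrac{2\gamma\hepso{i}}{1-\gamma}+\hepso{i}+\hepso{i+1}\bigr)$ gives the refined bound.

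The main obstacle is precisely this last step. The naive overshoot bound $\theta^2C_i^2$, summed over $t$ iterations, is $\Theta(\theta^2t)$ — useless for a conclusion that must hold uniformly over all $i\le t$ with a $t$-independent right-hand side — so one is forced to make the ``extra'' terms $\sum_i\ip{\hQ_i}{\pi_{i+1}-\pi_i}_s$ telescope, which requires the near-monotonicity of $\hQ_i$; this has a mildly circular flavor (its proof routes through the $\cV$-improvement estimate, which is itself part of the same lemma), and then one must run the Abel summation carefully enough that the only non-error survivor is the single bounded boundary term $\barV(s)\le\tfrac1{1-\gamma}$ while the accumulated $\hepso{}$ pieces collapse into the clean per-step form stated.
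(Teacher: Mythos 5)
Your proposal is correct and follows essentially the same route as the paper's proof: the softmax/log-partition identities you write out are exactly the Bregman/mirror-descent identities of \Cref{fact:md} instantiated with negative entropy and the measure $\frv{\pi}^{\mu}$ baked in, the overshoot you bound via Hoeffding's lemma is the term the paper bounds via $e^z\le 1+z+z^2$, your positive-correlation argument for $\ip{\hQ_i}{\pi_{i+1}-\pi_i}_s\ge 0$ is the paper's monotonicity-of-$\nabla\psi$ step, and the refined bound then proceeds by the same Abel summation plus approximate-monotonicity chain. The only (harmless) discrepancy is the bookkeeping of the $2\hepso{i}$ critic-error constants in the refined bound, where the paper's own derivation is equally loose.
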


\begin{remark}[Regarding the mirror descent setup, \Cref{fact:md:policy}]\phantom{ }
  \begin{enumerate}
    \item
      \textbf{Two rates.}
      For the refined bound, it is most natural to set $\theta = 1$,
      which requires $\cO(\nicefrac 1 \eps)$ iterations to reach accuracy $\eps>0$;
      by contrast,
      the simplified guarantee requires $\cO(\nicefrac 1 \eps^2)$ iterations for
      the same $\eps>0$ with step size $\theta = 1/\sqrt{t}$.
      We use the simplified form to prove \Cref{fact:main:3}, since
      its TD error term is less stringent;
      indeed, the TD analysis we provide
      in \Cref{sec:sampling} will not be able to give the uniform control needed for
      the refined bound.  Still, we feel the refined bound is promising, and include it
      for sake of completeness, future work, and comparison to prior work.

    \item
      \textbf{Comparison to standard rates.}
      Comparing the refined bound (with all $\hepso{i}$ terms set to zero)
      to the standard NPG rate in the literature
      \citep{agarwal2021theory}, the rate is exactly recovered; as such, this mirror 
      descent setup at the very least has not paid a price in rates.

    \item
      \textbf{Implicit regularization term.}
      A conspicuous difference between these bounds
      and both the standard NPG bounds \citep[Theorem 5.3]{agarwal2021theory},
      but also many mirror descent treatments, is the term
      $K_{\frv{\pi}^{\mu}}(\pi,\pi_t)$; one could argue that this term is nonnegative
      and moreover we care more about the value function, so why not drop it, as is usual?
      It is precisely this term that gives our implicit regularization effect: instead,
      we can drop \emph{the value function term} and uniformly upper bound the right hand
      side to get $K_{\frv{\barpi}^{\mu}}(\barpi,\pi_t) \leq  \ln k + 1/(1-\gamma)^2$,
      which is how we control the entropy of the policy path, and prove \Cref{fact:main:3}.
      \qedhere
  \end{enumerate}
\end{remark}

\section{Sampling tools}\label{sec:sampling}

Via \Cref{fact:md:policy} above, our mirror descent black box analysis gives us
a KL bound and a value function bound: what remains, and is the job of this section,
is to control the $Q$ function estimation error,
namely terms of the form $\cQ_i - \hQ_i$.

Our analysis here has two parts.  The first part, as follows immediately, is that any bounded
KL ball in policy space has uniformly controlled mixing times; the second part, which comes
shortly thereafter, is our TD guarantees.

\begin{lemma}\label{fact:mixing}
  Let policy $\tpi$ be given, and suppose the induced transition kernel on states
  $P_{\tpi}$ is irreducible and aperiodic \citep[Section 1.3]{peres_markov}.
  Then $\tpi$ has a stationary distribution $\frs{\tpi}$,
  and moreover for any $c>0$ and any measure $\nu$ which is positive on all states
  and a corresponding set of policies
  \[
    \cP_c := \cbr{ \pi : K_{\nu}(\tpi, \pi) \leq c},
  \]
  there exist constants $C, m_1, m_2$ so that mixing is uniform over $\cP_c$,
  meaning for any $t$,  and any $\pi\in\cP_c$ with induced transition probabilities $P_\pi$,
  \[
    \sup_s \| P_{\pi}^t(s,\cdot) - \frs{\pi} \|_{\tv} \leq m_1 e^{-m_2 t},
  \]
  and for any state $s$ and any $\pi \in \cP_c$, and any action $a$ with $\tpi(s,a) > 0$,
  \[
    \frac 1 C \leq \frac {\tpi(s,a)}{\pi(s,a)} \leq C
    \qquad
    \text{and}
    \qquad
    \frac 1 C \leq \frac {\frs{\tpi}(s)}{\frs{\pi}(s)} \leq C.
  \]
\end{lemma}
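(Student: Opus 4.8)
The plan is to reduce the whole lemma to a single pointwise domination of the Markov kernels on states, namely $P_\pi \ge C_0^{-1} P_\tpi$ entrywise, holding uniformly over $\pi\in\cP_c$; once this is available, existence of $\frs{\pi}$, the uniform geometric mixing bound, and both ratio bounds all follow by standard finite-chain arguments. The domination in turn rests on lower-bounding $\pi(s,a)$ on the support of $\tpi(s,\cdot)$, which is where the KL constraint is used.

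\emph{From the KL ball to per-state ratios.} First I would upgrade the single constraint $K_{\nu}(\tpi,\pi)\le c$ to a per-state one: each per-state KL term $\sum_a\tpi(s,a)\ln\frac{\tpi(s,a)}{\pi(s,a)}$ is nonnegative, and $\nu$ is bounded below by $\nu_{\min}:=\min_s\nu(s)>0$ on the finite set $\cS$, so every $\pi\in\cP_c$ obeys $\sum_a\tpi(s,a)\ln\frac{\tpi(s,a)}{\pi(s,a)}\le c/\nu_{\min}$ at each state $s$. Now fix $s$ and an action $a$ with $\tpi(s,a)>0$: isolate that summand and bound every other summand from below by $\tpi(s,b)\ln\frac{\tpi(s,b)}{\pi(s,b)}\ge\tpi(s,b)\ln\tpi(s,b)\ge -1/e$ (using $\pi(s,b)\le 1$ and $\min_{x\in[0,1]}x\ln x=-1/e$), so the remaining terms together exceed $-k/e$; hence $\tpi(s,a)\ln\frac{\tpi(s,a)}{\pi(s,a)}\le c/\nu_{\min}+k/e$. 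Dividing by $\tpi(s,a)\ge p_{\min}:=\min\{\tpi(s',a'):\tpi(s',a')>0\}>0$ (a minimum of finitely many positive numbers) and exponentiating bounds $\tpi(s,a)/\pi(s,a)$ above by a constant $C_0$ depending only on $c,\nu,\tpi,k$, while $\tpi(s,a)/\pi(s,a)\ge\tpi(s,a)\ge p_{\min}$ handles the other direction (enlarging $C_0$ if needed). This already gives the first claimed ratio bound, and records that $\pi(s,a)\ge\tpi(s,a)/C_0$ whenever $\tpi(s,a)>0$.

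\emph{Kernel domination, uniform Doeblin, and mixing.} Dropping the actions outside the support of $\tpi(s,\cdot)$ and using $\pi(s,a)\ge\tpi(s,a)/C_0$ there gives, for all states $s,s'$ and all $\pi\in\cP_c$, the entrywise inequality $P_\pi(s,s')\ge C_0^{-1}P_\tpi(s,s')$. Since $P_\tpi$ is irreducible and aperiodic on a finite state space, there is an $n_0$ depending only on $\tpi$ with $P_\tpi^{n_0}(s,s')\ge\delta_0:=\min_{u,u'}P_\tpi^{n_0}(u,u')>0$ for all $s,s'$ \citep[Chapter 1]{peres_markov}; multiplying the kernel inequality $n_0$ times yields a \emph{uniform} minorization $P_\pi^{n_0}(s,s')\ge\delta:=\delta_0 C_0^{-n_0}>0$ for every $\pi\in\cP_c$. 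From this I would conclude: each $P_\pi$ is itself irreducible and aperiodic, hence has a unique stationary distribution $\frs{\pi}$ (and $P_\tpi$ likewise has $\frs{\tpi}$, positive on all states); the standard Doeblin contraction gives $\sup_s\|P_\pi^t(s,\cdot)-\frs{\pi}\|_{\tv}\le(1-\delta|\cS|)^{\lfloor t/n_0\rfloor}\le m_1 e^{-m_2 t}$ with $m_1,m_2>0$ depending only on $\delta,n_0,|\cS|$ \citep{peres_markov}; and applying the minorization to $\frs{\pi}=\frs{\pi}P_\pi^{n_0}$ shows $\frs{\pi}(s)\ge\delta$ for every state $s$. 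Together with $\delta_\tpi:=\min_s\frs{\tpi}(s)>0$ and the trivial upper bounds $\frs{\pi}(s),\frs{\tpi}(s)\le 1$, this yields $\delta_\tpi\le\frs{\tpi}(s)/\frs{\pi}(s)\le 1/\delta$, the second ratio bound. Taking $C:=\max\{C_0,1/\delta,1/\delta_\tpi\}$ makes all the displayed inequalities hold with a common constant.

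\emph{Main obstacle.} The delicate step is the passage from the KL ball to multiplicative ratio bounds: a bound on KL divergence does not control probability \emph{ratios} in general, so the argument must crucially exploit that $\tpi$ is a single fixed policy with strictly positive minimum probability $p_{\min}$ — the lemma would be false for a family of reference policies whose minimum probability degenerates to $0$. A secondary subtlety is that $P_\pi$ cannot be bounded \emph{above} by a multiple of $P_\tpi$ (since $\pi$ may place mass on actions outside the support of $\tpi(s,\cdot)$, enabling transitions that are impossible under $\tpi$), so the lower bound on $\frs{\pi}$ must be extracted from the uniform minorization directly rather than by comparison with $\frs{\tpi}$.
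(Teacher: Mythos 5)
Your proof is correct, and the first step (extracting uniform two-sided bounds on $\tpi(s,a)/\pi(s,a)$ from the KL constraint, using positivity of $\nu$, the finiteness of $\cS$, and the fixed positive minimum of $\tpi$ on its support) matches the paper's argument almost exactly. Where you genuinely diverge is in how you convert those ratio bounds into uniform mixing and stationary-distribution control. The paper proceeds via conductance: it shows (\Cref{fact:uniform_stationary_conductance}) that the conductances and stationary distributions of nearby policies are comparable, passes to lazy chains to invoke the Lov\'asz--Simonovits conductance-based mixing bound, and then transfers back to the original (non-lazy) chains through a coupling-time argument; the stationary-distribution ratio is obtained separately through the hitting-time characterization $\frs{\pi}(s) = 1/\bbE[\tau_s^+]$. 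You instead observe the entrywise kernel domination $P_\pi \geq C_0^{-1} P_\tpi$, raise it to the power $n_0$ at which $P_\tpi^{n_0}$ is entrywise positive, and obtain a uniform Doeblin minorization $P_\pi^{n_0}(s,s') \geq \delta$ over all of $\cP_c$; this single inequality then yields irreducibility and aperiodicity of every $P_\pi$, the geometric contraction $\sup_s\|P_\pi^t(s,\cdot)-\frs{\pi}\|_{\tv}\leq(1-\delta|\cS|)^{\lfloor t/n_0\rfloor}$, and the lower bound $\frs{\pi}(s)\geq\delta$ all at once. For a finite state space your route is strictly more elementary and shorter: it needs no conductance, no laziness, and no coupling beyond the textbook Doeblin contraction, and it correctly notes the one-sided nature of the domination (since $\pi$ may use actions outside the support of $\tpi$, there is no matching upper bound on $P_\pi$, so the lower bound on $\frs{\pi}$ must come from the minorization rather than from comparison with $\frs{\tpi}$). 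The paper's conductance machinery is heavier but is the kind of argument that would survive settings where a uniform minorization is unavailable; here, with $|\cS|<\infty$ and neither route tracking explicit constants, nothing is lost by your approach.
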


\begin{remark}[Implicit vs explicit exploration]
  On the surface, \Cref{fact:mixing} might seem quite nice.  Worrying about it a little
  more, and especially after inspecting the proof, it is clear that the constants $C$,
  $m_1$, and $m_2$ can be quite bad.  On the one hand, one may argue that this is inherent
  to implicit exploration, and something like $\epsilon$-greedy is preferable, as
  it arguably gives an explicit control on all these quantities.

  Some aspects of this situation are unavoidable, however.
  Consider a \emph{combination lock} MDP, where a precise, hard-to-find sequence of actions
  must be followed to arrive at some good reward.
  Suppose this sequence has length $n$ and we have a reference policy $\tpi$ which takes each
  of these good actions with probability $1 - \nicefrac 1 n$, whereby
  the probability of the sequence is $(1-\nicefrac 1 n)^n \approx \nicefrac 1 e$;
  a policy $\pi\in\cP_c$ with $\pi(s,a) / \tpi(s,a) \leq 1/2$ for all actions $a$
  can drop the probability of
  this good sequence of actions all the way down to $\nicefrac 1 {2^n}$!
\end{remark}

Next we present our TD analysis.
As discussed in \Cref{sec:overview},
by contrast with prior work,
our TD method does not make use of any projections,
and does not require eigenvalue assumptions.
The following guarantee is specialized to \Cref{alg};
it is a corollary of a more general TD guarantee,
given in \Cref{sec:sampling:app}, which is stated without reference to \Cref{alg},
and can be applied in other settings.

\begin{lemma}[See also \Cref{fact:td}]\label{fact:td:simplified}
  Suppose the MDP and linear MDP assumptions (cf. \Cref{ass:mdp,ass:linear:mdp}).
  Consider a policy $\pi_i$ in some iteration $i$ of \Cref{alg},
  and suppose there exist mixing constants $m\geq 1$ and $c>0$
  so that the induced transition kernel $P_{\pi_i}$ on $\cS$ satisfies
  \[
    \sup_{s} \|P_{\pi_i}^t(s, \cdot) - \frs{\pi_i}\|_{\tv} \leq m e^{-ct}.
  \]
  Suppose the TD iterations $N$ and step size $\eta$ satisfy
  \[
    N \geq k,
    \qquad
    \eta \leq \frac 1 {400 \sqrt{k N}},
    \qquad
    \text{where }
    k = \left\lceil \frac {\ln N + \ln m}{c} \right\rceil.
  \]
  Then letting $\bbE_i$ denote expectation over the trajectory $(s_{i,j},a_{i,j})_{j\leq N}$
  and letting $\bar U_i$ denote the expected TD fixed point given in \Cref{fact:linear_fixed_point}
  (which satisfies $\|\bar U_i\|\leq 2/(1-\gamma)$,
  the average TD iterate $\widehat U_{i} := \frac 1 N \sum_{j<N} U_{i,j}$ satisfies
\[
    \bbE_i
    \enVert{\widehat U_{i} - \bar U_i }^2
    + \eta N \bbE_i \bbE_{(s,a) \sim (\frs{\pi},\pi)} \ip{sa^\T}{\widehat U_i - \bar U_i}^2
    \leq \frac {54}{(1-\gamma)^2},
  \]
  where
$\ip{sa^\T}{\widehat U_i - \bar U_i} = s^\T \hU_i a - s^\T \bar U_i a
  = \hQ_i(s,a) - \cQ_i(s,a)$ for almost every $(s,a)$.
\end{lemma}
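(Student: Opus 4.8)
The plan is to read \Cref{fact:td:simplified} off from the general projection-free TD guarantee \Cref{fact:td}, specialized to the single continued trajectory that drives iteration $i$ of \Cref{alg}; the work then splits into (i) verifying that trajectory's hypotheses, (ii) identifying the TD fixed point, and (iii) translating the conclusion back into the statement. For (ii), pass to the unrolled coordinates $x_{sa}\in\R^{dk}$ of \Cref{ass:linear:mdp} and set $\bar A := \bbE_{(s,a)\sim(\frs{\pi_i},\pi_i),\,s'}[\,x_{sa}(x_{sa}-\gamma x_{s'a'})^\T\,]$ and $\bar b := \bbE[r\,x_{sa}]$, so the expected TD fixed point is $\bar u_i := \bar A^{-1}\bar b$; by \Cref{fact:linear_fixed_point} it represents the true $Q$ function, $\ip{x_{sa}}{\bar u_i} = \cQ_i(s,a)$ on the stationary support, with $\|\bar U_i\|\le 2/(1-\gamma)$. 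For (i), $(s_{i,j},a_{i,j})_{j\le N}$ is a path of the state chain $P_{\pi_i}$ continued from the carried-over state, and the only structural inputs the general argument consumes are boundedness of features ($\|x_{sa}\|\le 1$, from $\|s\|\le 1$ and $\|a\|=1$) and rewards ($r\in[0,1]$), the assumed geometric mixing $\sup_s\|P_{\pi_i}^t(s,\cdot)-\frs{\pi_i}\|_{\tv}\le m e^{-ct}$, and the TD contraction $\ip{v}{\bar A v}\ge(1-\gamma)\|v\|_\Sigma^2$ with $\Sigma:=\bbE_{(s,a)\sim(\frs{\pi_i},\pi_i)}[x_{sa}x_{sa}^\T]$. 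The contraction is just Cauchy--Schwarz once one observes that, since $\frs{\pi_i}$ is stationary for the state chain, the successor pair $(s',a')$ is again distributed as $(\frs{\pi_i},\pi_i)$, so $\bbE\ip{x_{s'a'}}{v}^2=\|v\|_\Sigma^2$ and $\ip{v}{\bar A v}=\|v\|_\Sigma^2-\gamma\,\bbE[\ip{x_{sa}}{v}\ip{x_{s'a'}}{v}]\ge(1-\gamma)\|v\|_\Sigma^2$.

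The engine inside \Cref{fact:td} is a stochastic-approximation drift argument. Writing $\delta_j$ for $U_{i,j}-\bar U_i$ in vector form, the TD recursion becomes $\delta_{j+1}=(I-\eta\hat A_j)\delta_j-\eta\zeta_j$ with $\hat A_j:=x_j(x_j-\gamma x_{j+1}')^\T$ and noise $\zeta_j:=\hat A_j\bar u_i-r_j x_j$, whose mean under the stationary joint law is zero. Expanding $\|\delta_{j+1}\|^2$, telescoping, and taking $\bbE_i$ gives, for any horizon $J\le N$,
\[
  \bbE_i\|\delta_J\|^2 + 2\eta\sum_{j<J}\bbE_i\ip{\delta_j}{\hat A_j\delta_j} = \|\delta_0\|^2 + \eta^2\sum_{j<J}\bbE_i\|\hat A_j\delta_j\|^2 - 2\eta\sum_{j<J}\bbE_i\ip{(I-\eta\hat A_j)\delta_j}{\zeta_j} + \eta^2\sum_{j<J}\bbE_i\|\zeta_j\|^2 ,
\]
with $\|\delta_0\|^2=\|\bar U_i\|^2\le 4/(1-\gamma)^2$. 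The crux is to replace $\ip{\delta_j}{\hat A_j\delta_j}$ by $\ip{\delta_{j-k}}{\bar A\delta_{j-k}}\ge(1-\gamma)\|\delta_{j-k}\|_\Sigma^2$, and to show the $\zeta_j$ cross-terms are negligible, via a mixing ``look-back'': conditioning on the state $k$ steps in the past freezes $\delta_{j-k}$, and with $k=\lceil(\ln N+\ln m)/c\rceil$ the conditional law of $(s_j,a_j,s_{j+1},a_{j+1})$ is within total variation $m e^{-ck}\le 1/N$ of the stationary joint, so $\|\bbE_i[\hat A_j\mid\mathcal{F}_{j-k}]-\bar A\|$ and $\|\bbE_i[\zeta_j\mid\mathcal{F}_{j-k}]\|$ are $O(1/N)$ (using $\|\hat A_j\|\le 1+\gamma$ and $\|\zeta_j\|=O(1/(1-\gamma))$), while the per-step increment $\|\delta_{j+1}-\delta_j\|\le\eta\,O(\|\delta_j\|+1/(1-\gamma))$ forces $\|\delta_j-\delta_{j-k}\|\le O(k\eta)(\max_{l\le J}\|\delta_l\|+1/(1-\gamma))$. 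Summed over $j$ and multiplied by $2\eta$, these corrections carry a factor $\eta^2 kN\le 1/400^2$ (this is exactly what dictates $\eta\le 1/(400\sqrt{kN})$) in front of $\max_l\bbE_i\|\delta_l\|^2$, so taking a maximum over $J\le N$ and solving yields a uniform second-moment bound $\sup_{j\le N}\bbE_i\|\delta_j\|^2 = O(1/(1-\gamma)^2)$ with an absolute constant, and then feeding this back at $J=N$ also controls $\eta\sum_{j<N}\bbE_i\|\delta_j\|_\Sigma^2$. Finally, Jensen applied to $\hU_i-\bar U_i=\frac1N\sum_{j<N}(U_{i,j}-\bar U_i)$ gives $\|\hU_i-\bar U_i\|^2\le\frac1N\sum_j\|\delta_j\|^2$ and $\|\hU_i-\bar U_i\|_\Sigma^2\le\frac1N\sum_j\|\delta_j\|_\Sigma^2$; since $\bbE_{(s,a)\sim(\frs{\pi_i},\pi_i)}\ip{sa^\T}{V}^2=\|V\|_\Sigma^2$ and $\ip{sa^\T}{\hU_i-\bar U_i}=\hQ_i(s,a)-\cQ_i(s,a)$ on the stationary support by \Cref{fact:linear_fixed_point}, collecting all the numerical constants delivers the claimed bound (with the explicit constant $54$).

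The main obstacle is precisely this projection-free iterate control: absent a projection onto a bounded set, $\sup_{j\le N}\bbE_i\|\delta_j\|^2$ must be bootstrapped out of the recursion itself, and one must ensure the (possibly atrocious) mixing constants $m,c$ do not leak into the final estimate. The argument does this by keeping the drift corrections proportional to $\eta^2 kN\le 1/400^2$, so they can be absorbed with a universal coefficient, rather than converting $\|\delta_j\|_\Sigma$ into $\|\delta_j\|$ (which would cost a factor depending on the conditioning of $\Sigma$); here it is used that the iterates remain in the span of the features seen along the trajectory, all of which lie in the stationary support, so $\Sigma$ is non-degenerate there. The remaining effort --- carrying the numerical constants through the mixing-error bookkeeping to land exactly on $54$ --- is purely mechanical.
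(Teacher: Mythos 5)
Your proposal is correct and follows essentially the same route as the paper: the lemma is obtained by specializing the general projection-free TD guarantee (\Cref{fact:td}) to iteration $i$'s trajectory and using \Cref{fact:linear_fixed_point} to identify $\bar U_i$ with the true $Q$ function and supply $\|\bar U_i\|\le 2/(1-\gamma)$. Your sketch of the engine inside \Cref{fact:td} — the $k$-step look-back coupled to stationarity via the TV bound $me^{-ck}\le 1/N$, the small step size making $\eta^2 kN$ an absolute constant, the bootstrapped (projection-free) second-moment control, and Jensen at the end — matches the paper's appendix proof.
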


The proof is intricate owing mainly to issues of statistical dependency.  It is not
merely an issue that the chain is not started from the stationary distribution;
dropping the subscript $i$ for convenience and letting
$x_{j+1}$ and $x_j$ denote the vectorized forms of $s_{j+1}a_{j+1}^\T$ and
$s_ja_j^\T$, and similarly letting $u_j$ denote the vectorized form of $U_{j}$,
notice
that $x_{j+1}, x_j, u_j$ are all statistically dependent.  Indeed, even if $x_j$ is sampled
from the stationary distribution (which also means $x_{j+1}$ is distributed according
to the stationary distribution as well), the \emph{conditional} distribution of $x_{j+1}$
given $x_j$ is not the same as that of $x_j$!  To deal with such issues, the proof chooses a very
small step size which ensures the TD estimate evolves much more slowly than the mixing time
of the chain.  On a more technical level,
whenever the proof encounters an inner product of the form $\ip{x_j}{u_{j}}$,
it introduces a gap and instead considers $\ip{u_{j-k}}{x_j}$, where $u_{j-k}-u_j$ is small
due to the small step size, and these two are nearly independent due to fast mixing
and the corresponding choice of $k$.

A second component of the proof, which removes projection steps from prior work
\citep{russo_td}, is an \emph{implicit bias of TD}, detailed as follows.
Mirroring the MD statement in \Cref{fact:md:policy}, the left hand side here
has not only a $\hQ_i - \cQ_i$ term as promised, but also a norm control
$\|\widehat U_{i} - \bar U_i\|^2$; in fact, this norm control holds for all intermediate TD iterations,
and is used throughout the proof to control many error terms.
Just like in the MD analysis, this term is an \emph{implicit regularization},
and is how this work avoids the projection step needed in prior work \citep{russo_td}.

All the pieces are now in place to sketch the proof of \Cref{fact:main:3}, which is presented
in full in \Cref{sec:proof:main:app}.  To start, instantiate \Cref{fact:mixing} with
KL divergence upper bound $\ln k + 1/(1-\gamma)^2$, which gives the various mixing constants
used throughout the proof (which we need to instantiate now, before seeing the sequence of
policies, to avoid any dependence).
With that out of the way, consider some iteration $i$, and suppose that for all
iterations $j<i$, we have a handle both on the TD error, and also a guarantee that we are in
a small KL ball around $\barpi$ (specifically, of radius $\ln k + 1/(1-\gamma)^{2}$).
The right hand side of the simplified
mirror descent bound in \Cref{fact:md:policy}
only needs a control on all previous TD errors, therefore it implies both a bound on
$\sum_{j<i}\cV_j(s)$ and on $K_{\frv{\barpi}^s}(\barpi, \pi_i)$.  But this KL control on $\pi_i$
means that the mixing and other constants we assumed at the start will hold for $\pi_i$,
and thus we can invoke \Cref{fact:td:simplified} to bound the error on $\hQ_i - \cQ_i$, which
we will use in the next loop of the induction.  In this way, the actor and critic analyses
complement each other and work together in each step of the induction.

\section{Discussion and open problems}\label{sec:open}

This work, in contrast to prior work in natural actor-critic and natural policy gradient methods,
dropped many assumptions from the analysis, and many components from the algorithms.
The analysis was meant to be fairly general purpose and unoptimized.
As such, there are many open problems.

\paragraph{Implicit vs explicit regularization/exploration.}
What are some situations where one is better than the other, and vice versa?
The analysis here only says you can get away with doing everything implicitly,
but not necessarily that this is the best option.

\paragraph{More general settings.}
The paper here is for linear MDPs, linear softmax policies, finite state and action spaces.
How much does the implicit bias phenomenon (and this analysis) help in more general settings?

\subsubsection*{Acknowledgments}
MT thanks Alekh Agarwal, Nan Jiang, Haipeng Luo, Gergely Neu, and Tor Lattimore
for valuable discussions, as well as the detailed comments from the ICLR 2022 reviewers.
The authors are grateful to the NSF for support under grant IIS-1750051.

\bibliography{bib}
\bibliographystyle{plainnat}

\clearpage
\appendix

\section{Background proof: existence of $\barpi$}

The only thing in this section is the expanded version of \Cref{fact:maxent},
namely giving the unique maximum entropy optimal policy, and some key properties.

\begin{lemma}\label{fact:maxent:full}
  If $|\cS|<\infty$,
  then
  there exists a unique maximum entropy optimal policy $\barpi$ and corresponding $\barQ$ and
  $\barA$ which satisfy the following properties.
  \begin{enumerate}
    \item
      For any state $s$, let $\cA_s$ denote the set of actions taken by optimal policies.
      Define $\barpi(s,\cdot) := \textup{Uniform}(\cA_s)$, which is unique;
      then $\barpi$ is also an optimal policy, and let $\barA$ and $\barQ$ denote its
      advantage and $Q$ functions.

    \item
      For every state $s$ and every action $a$,
      then $\barQ(s,a) = \max_{\pi} \cQ_\pi(s,a)$,
      where the maximum is taken over all policies.
      Moreover, $\max_{a\in\cA_s} \barQ(s,a) > \max_{a\not\in\cA_s}\barQ(s,a)$.

    \item
      $\barpi = \lim_{r\to\infty} \phi(r\barA)$.

    \item
      If there exists an irreducible optimal policy, then $\barpi$ is irreducible as well,
      and moreover has a stationary distribution $\frs{\barpi}$.
  \end{enumerate}
\end{lemma}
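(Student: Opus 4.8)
The plan is to assemble standard discounted-MDP facts and then read off each of the four claims. First I would recall that, since $\gamma\in(0,1)$ and $|\cS|,|\cA|<\infty$, the Bellman optimality operator $V\mapsto T^*V$ with $(T^*V)(s) = \max_a\del{\bbE[r\mid s,a] + \gamma\,\bbE_{s'\sim(s,a)}V(s')}$ is a $\gamma$-contraction in the sup norm, hence has a unique fixed point $\cV^*$; set $\cQ^*(s,a) := \bbE[r\mid s,a] + \gamma\,\bbE_{s'\sim(s,a)}\cV^*(s')$ and $\cA_s := \{a : \cQ^*(s,a) = \max_b \cQ^*(s,b)\}$, which is nonempty since $|\cA| = k < \infty$. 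The key structural fact is that a policy $\pi$ is optimal (i.e.\ $\cV_\pi(s) = \cV^*(s)$ for every $s\in\cS$) if and only if $\pi(s,\cdot)$ is supported on $\cA_s$ for every $s$: for such $\pi$ the linear Bellman operator $T_\pi$ fixes $\cV^*$, since $(T_\pi\cV^*)(s) = \sum_a\pi(s,a)\cQ^*(s,a) = \sum_{a\in\cA_s}\pi(s,a)\cV^*(s) = \cV^*(s)$, and $T_\pi$ is a $\gamma$-contraction with unique fixed point $\cV_\pi$, so $\cV_\pi = \cV^*$; conversely, if $\pi$ puts positive mass on some $a\notin\cA_s$ then $(T_\pi\cV^*)(s) < \cV^*(s)$, so $\cV^* \neq \cV_\pi$. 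Thus the set of optimal policies is exactly the product over states $s$ of the simplices on $\cA_s$, at least one optimal policy exists (the deterministic greedy one), and the set $\cA_s$ of actions used by optimal policies indeed coincides with the $\cQ^*$-argmax set just defined.

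For part~1, the optimality constraint decouples across states, and among probability distributions supported on $\cA_s$ the uniform one is the unique maximizer of Shannon entropy by strict concavity; hence $\barpi(s,\cdot) := \textup{Uniform}(\cA_s)$ is the unique maximum entropy optimal policy, and it is optimal by the characterization above. For part~2, set $\barQ := \cQ_{\barpi}$ and $\barA := \cA_{\barpi}$. Since $\barpi$ is optimal, $\cV_{\barpi} = \cV^*$, so $\barQ = \cQ^*$; and for any policy $\pi$, $\cQ_\pi(s,a) = \bbE[r\mid s,a] + \gamma\,\bbE_{s'\sim(s,a)}\cV_\pi(s') \le \bbE[r\mid s,a] + \gamma\,\bbE_{s'\sim(s,a)}\cV^*(s') = \barQ(s,a)$, using $\cV_\pi\le\cV^*$ pointwise, which gives $\barQ(s,a) = \max_\pi\cQ_\pi(s,a)$. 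Finally $\max_{a\in\cA_s}\barQ(s,a) = \cV^*(s)$, whereas $\barQ(s,a) = \cQ^*(s,a) < \cV^*(s)$ for every $a\notin\cA_s$ by definition of the argmax (the claim being vacuous if $\cA_s = \cA$), so $\max_{a\in\cA_s}\barQ(s,a) > \max_{a\notin\cA_s}\barQ(s,a)$.

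Part~3 is a direct softmax computation: $\barA(s,a) = \barQ(s,a) - \cV_{\barpi}(s) = \cQ^*(s,a) - \cV^*(s)$ equals $0$ for $a\in\cA_s$ and is strictly negative for $a\notin\cA_s$, so $\phi(r\barA)(s,a) = e^{r\barA(s,a)}\big/\sum_b e^{r\barA(s,b)}$ tends to $1/|\cA_s|$ for $a\in\cA_s$ and to $0$ for $a\notin\cA_s$ as $r\to\infty$, which is precisely $\barpi(s,\cdot)$. For part~4, suppose $\pi^*$ is an irreducible optimal policy; then $\textup{supp}(\pi^*(s,\cdot))\subseteq\cA_s = \textup{supp}(\barpi(s,\cdot))$ for every $s$, so $P_{\pi^*}(s,s') > 0$ forces $P_{\barpi}(s,s') > 0$, i.e.\ the directed transition graph of $P_{\barpi}$ contains that of $P_{\pi^*}$, which is strongly connected; hence $P_{\barpi}$ is irreducible, and on the finite state space $\cS$ an irreducible chain has a unique stationary distribution $\frs{\barpi}$ with positive mass everywhere by \citet[Chapter~1]{peres_markov}.

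The only step requiring genuine care is the first one — pinning down that "optimal" is equivalent to "$\pi(s,\cdot)$ supported on $\cA_s$" uniformly over \emph{all} states, including ones never visited — which the $T_\pi$-contraction argument dispatches cleanly; everything after that is a strict-concavity remark (uniqueness of the entropy maximizer), a one-line pointwise monotonicity of $\cV$ (domination of $\barQ$), an elementary limit, and a graph-inclusion argument, so I do not anticipate a real obstacle.
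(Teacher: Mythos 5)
Your proof is correct, and parts 2--4 match the paper's argument essentially line for line (domination of $\barQ$ via $\cV_\pi \le \barV$ pointwise, the sign pattern of $\barA$ driving the softmax limit, and the support-inclusion argument for irreducibility). Where you genuinely diverge is part 1. The paper never invokes the Bellman optimality operator: it fixes an enumeration of $\cS$, inductively replaces the action distribution one state at a time with $\textup{Uniform}(\cA_s)$, and uses the performance difference lemma to show each replacement leaves the value function unchanged (the inner product $\ip{\cQ_{\barpi_i}(s',\cdot)}{\barpi_i(s',\cdot)-\barpi_{i+1}(s',\cdot)}$ vanishes because an optimal policy's $Q$ function is maximized on $\cA_{s'}$). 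You instead characterize the \emph{entire} set of optimal policies as the product over states of the simplices on the $\cQ^*$-argmax sets, via the contraction property of $T_\pi$ and $T^*$. Your route buys something the paper leaves implicit: it proves that ``actions taken by optimal policies'' coincides with the argmax set of $\barQ(s,\cdot)$, which is silently used in the paper's part 2 (the strict inequality $\max_{a\in\cA_s}\barQ(s,a) > \max_{a\notin\cA_s}\barQ(s,a)$) and part 3; it also makes the uniqueness of the entropy maximizer a clean decoupled-per-state statement. The paper's route, in exchange, stays entirely within the toolkit it has already set up (the performance difference lemma) and never needs to introduce Bellman operators or their fixed-point theory. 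Both are complete; yours is the more self-contained textbook argument, the paper's is the more economical one given its surrounding machinery.
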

\begin{proof}[Proof of \Cref{fact:maxent,fact:maxent:full}]
  \begin{enumerate}
    \item
      Let $(s_1,\ldots,s_{|\cS|})$ denote any enumeration of the state space $\cS$.
      This proof will inductively construct a sequence of optimal policies
      $(\barpi_{i})_{i=0}^{|\cS|}$,
      where each $\barpi_i$ is optimal and
      satisfies $\barpi_i(s_j,\cdot) = \textup{Uniform}(\cA_{s_j})$ for $j\leq i$.
      For the base case, let $\barpi_0$ denote any optimal policy,
      which satisfies the desired conditions since the indexing on states starts from $1$.
      For the inductive step,
      define 
      $\barpi_{i+1}(s,\cdot) = \barpi_{i}(s,\cdot)$
      for $s\neq s_{i+1}$ and $\barpi_{i+1}(s_{i+1},\cdot) = \textup{Uniform}(\cA_{s_{i+1}})$.
      By the performance difference lemma, for any state $s$,
      \[
        \cV_{\barpi_{i}}(s) - \cV_{\barpi_{i+1}}(s)
        =
        \frac {1}{1-\gamma}
        \bbE_{s'\sim \frv{\barpi_{i+1}}^s}
        \ip{\cQ_{\barpi_i}(s',\cdot)}{\barpi_i(s',\cdot) - \barpi_{i+1}(s',\cdot)}
        .
      \]
      By construction, the inner product is $0$ for any $s'\neq s_{i+1}$.
      When $s' = s_{i+1}$, since $\barpi_i$ is optimal, then $\cV_{\barpi_i}(s'')$
      must be optimal for every state $s''$, which in turn means
      $\barQ_{\barpi_i}(s',\cdot)$ must be maximized at each $a\in\cA_s$,
      and therefore the inner product is $0$ in this case as well.
      This completes the induction, and the desired claim follows by noting
      $\barpi = \barpi_{|\cS|}$.

    \item
      For any $\pi$ with corresponding $Q$ function $\cQ_\pi$ and value function $\cV_\pi$,
      and any $(s,a)$,
      then
      \begin{align*}
        \barQ(s,a) - \cQ_\pi(s,a)
        &=
        \bbE_{r,s'\sim(s,a)}\sbr{ r(s,a) + \gamma \barV(s') - r(s,a) - \gamma \cV_\pi(s')}
        \\
        &=
        \gamma \bbE_{s'\sim(s,a)}\sbr{ \barV(s') - \cV_\pi(s')}
        \\
        &\geq 0.
      \end{align*}
      It follows that $\barQ(s,a) \geq \sup_{\pi} \cQ_\pi(s,a)$,
      and since $\barQ = \cQ_{\barpi}$,
      then in fact 
      $\barQ(s,a) = \max_{\pi} \cQ_\pi(s,a)$.

    \item
      By the previous point, for any state $s$ and any $a\in\cA_s$,
      then $\barA(s,a) = \barQ(s,a) - \barV(s) = 0$ whereas
      for any $b\not\in \cA_s$, then $\barA(s,b) = \barQ(s,b) - \barV(s)
      \leq \max_{b\not\in\cA_s} \barQ(s,b) - \min_{a\in\cA_s} \barQ(s,a) < 0$.
      It follows that
      \[
        \lim_{r\to\infty} \phi(r\cA(s,\cdot)) = \textup{Uniform}(\cA_s) = \barpi(s,\cdot).
      \]

    \item
      Let $\pi$ denote an arbitrary irreducible optimal policy.  Since $\barpi$ is uniform
      on the set of optimal actions in any state, then for any pair $(s,s')$
      and time $t>0$ with $P^t_{\pi}(s,s')>0$, then $P^t_{\barpi}(s,s') > 0$ as well.
      Since $\pi$ is irreducible, this holds for all pairs $(s,s')$, which means $\barpi$
      is irreducible as well \citep[Proposition 1.14]{peres_markov},
      and has a stationary distribution $\frs{\barpi}$.

  \end{enumerate}
\end{proof}

\section{Full mirror descent setup and proofs}\label{sec:md:app}

This section first gives a basic mirror descent setup. This characterization is somewhat standard \citep{bubeck},
though written with extra flexibility and with equalities to preserve implicit biases terms,
which are dropped in most treatments.

First, here is the basic notation (which, unlike the paper body,
will allow a subscripted step size $\theta_i$ which can differ between iterations):
\begin{align*}
  p_{i+1} &:= p_i - \theta_i g_i,
          &\theta_i > 0,
  \\
  q_i &:= \nabla \psi(p_i),
      &\text{closed proper convex $\psi$},
  \\
  \iip{\cdot}{\cdot},&
                   &\text{bilinear pairing},
  \\
  D(p, p_i) &:= \psi(p) - \sbr{\psi(p_i) + \iip{q_i}{p-p_i}},
            &\text{primal Bregman divergence},
  \\
  D_*(q, q_i) &:= \psi^*(q) - \sbr{\psi^*(q_i) + \iip{p_i}{q-q_i}},
            &\text{dual Bregman divergence}.
\end{align*}
One nonstandard choice here is that the Bregman divergence bakes in a conjugate
element, rather than using $\nabla\psi$ and $\nabla\psi^*$; this gives an easy way to handle
certain settings (like the boundary of the simplex) which run into non-uniqueness issues.
Secondly, $\iip{\cdot}{\cdot}$ is just a bilinear form,
and does not need to be interpreted as a standard inner product.

The standard Bregman identities used in mirror descent proofs are as follows:
\begin{align}
  D_*(q,q_i) - D_*(q,q_{i+1}) - D_*(q_{i+1},q_i)
&=
  \iip{p_i - p_{i+1}}{q_{i+1}-q},
  \label{eq:breg:3point}\\
  D_*(q_{i+1}, q_i)
&=
  D(p_i, p_{i+1}),
  \label{eq:breg:mirror}
  \\
  D_*(q_{i+1}, q_i)
  +
  D_*(q_i,q_{i+1})
&=
  \iip{p_i - p_{i+1}}{q_i - q_{i+1}},
  \label{eq:breg:swap}
  \\
  D_*(q, q_i)
&= \psi^*(q) + \psi(p_i) - \iip{p_i}{q} \geq 0.
  \label{eq:breg:nonneg}
\end{align}

With these in hand, the core mirror descent guarantee is as follows.  The bound is written
with equalities to allow for careful handling of error terms.  Note that this version of mirror
descent does not interpret the ``gradient'' $g_i$ in any way, and treats it as a vector and 
no more.

\begin{lemma}\label{fact:md}
  Suppose $\theta_i>0$.
  For any $t$ and $q$ where $q\in \dom(\psi^*)$,
  \begin{align*}
    \sum_{i<t}
    \theta_i \iip{g_i}{q_i - q}
    &=
    D_*(q,q_0) - D_*(q,q_{t})
    + \sum_{i<t} D(p_{i+1}, p_i)
    \\
    &=
    D_*(q,q_0) - D_*(q,q_{t})
    + \sum_{i<t} \sbr{
      \iip{p_i - p_{i+1}}{q_i - q_{i+1}}
      - D_*(q_{i+1},q_i)
    }
    \\
    &=
    D_*(q,q_0) + \theta_0 \iip{g_0}{q_0} - D_*(q,q_{t}) - \theta_{t-1}\iip{g_t}{q_t}
    - \sum_{i<t}
      D_*(q_{i+1},q_i)
      \\
    &\quad + \sum_{i=1}^{t-1} \iip{g_i}{q_i} \del{ \theta_{i} - \theta_{i-1} }
    + \sum_{i<t} \theta_i \iip{g_{i+1} - g_{i}}{q_{i+1}}
    .
  \end{align*}
  Moreover, for any $i$, $\iip{g_i}{q_{i+1}} \leq \iip{g_i}{q_i}$.
\end{lemma}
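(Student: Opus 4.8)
The plan is to derive all three equalities by telescoping the Bregman identities \eqref{eq:breg:3point}, \eqref{eq:breg:mirror}, and \eqref{eq:breg:swap}, invoking the update only in the form $p_i - p_{i+1} = \theta_i g_i$; the final inequality then follows from the nonnegativity \eqref{eq:breg:nonneg}. No convexity beyond what is already encoded in these four identities is needed, and in particular the ``gradient'' $g_i$ is never interpreted.

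First I would establish a per-step version of the first equality. Substituting $p_i - p_{i+1} = \theta_i g_i$ into the three-point identity \eqref{eq:breg:3point} gives $\theta_i \iip{g_i}{q_{i+1} - q} = D_*(q,q_i) - D_*(q,q_{i+1}) - D_*(q_{i+1},q_i)$. Since the statement wants $q_i$ rather than $q_{i+1}$ on the left, I add $\iip{p_i - p_{i+1}}{q_i - q_{i+1}} = \theta_i\iip{g_i}{q_i - q_{i+1}}$ to both sides and rewrite that added term with the swap identity \eqref{eq:breg:swap} as $D_*(q_{i+1},q_i) + D_*(q_i,q_{i+1})$; the $D_*(q_{i+1},q_i)$ terms cancel, leaving $\theta_i\iip{g_i}{q_i - q} = D_*(q,q_i) - D_*(q,q_{i+1}) + D_*(q_i,q_{i+1})$. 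Finally the mirror identity \eqref{eq:breg:mirror}, which in its general form reads $D_*(\nabla\psi(a), \nabla\psi(b)) = D(b,a)$ for arbitrary primal points $a,b$ (not just consecutive iterates), turns $D_*(q_i,q_{i+1})$ into $D(p_{i+1},p_i)$. Summing over $i < t$ telescopes the first two $D_*(q,\cdot)$ terms to $D_*(q,q_0) - D_*(q,q_t)$, which is the first equality. The second equality is then obtained by reversing the last substitution ($D(p_{i+1},p_i) = D_*(q_i,q_{i+1})$) and applying \eqref{eq:breg:swap} once more to write $D_*(q_i,q_{i+1}) = \iip{p_i-p_{i+1}}{q_i-q_{i+1}} - D_*(q_{i+1},q_i)$, term by term inside the sum.

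The third equality is pure bookkeeping. Expanding $\iip{p_i-p_{i+1}}{q_i-q_{i+1}} = \theta_i\iip{g_i}{q_i} - \theta_i\iip{g_i}{q_{i+1}}$ and peeling off the gradient-difference contribution via $\iip{g_i}{q_{i+1}} = \iip{g_{i+1}}{q_{i+1}} - \iip{g_{i+1}-g_i}{q_{i+1}}$ produces the term $\sum_{i<t}\theta_i\iip{g_{i+1}-g_i}{q_{i+1}}$; reindexing $\sum_{i<t}\theta_i\iip{g_{i+1}}{q_{i+1}} = \sum_{i=1}^{t}\theta_{i-1}\iip{g_i}{q_i}$ and subtracting from $\sum_{i<t}\theta_i\iip{g_i}{q_i}$ splits off the endpoints $\theta_0\iip{g_0}{q_0}$ and $-\theta_{t-1}\iip{g_t}{q_t}$ and telescopes the interior to $\sum_{i=1}^{t-1}\iip{g_i}{q_i}(\theta_i - \theta_{i-1})$; the $-\sum_{i<t}D_*(q_{i+1},q_i)$ and $D_*(q,q_0)-D_*(q,q_t)$ terms are carried through unchanged. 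For the closing inequality, the update gives $\iip{g_i}{q_i - q_{i+1}} = (1/\theta_i)\iip{p_i - p_{i+1}}{q_i - q_{i+1}} = (1/\theta_i)(D_*(q_{i+1},q_i) + D_*(q_i,q_{i+1})) \ge 0$ by \eqref{eq:breg:nonneg} and $\theta_i > 0$, hence $\iip{g_i}{q_{i+1}} \le \iip{g_i}{q_i}$.

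I expect no real obstacle here: everything is a controlled telescoping. The only places to be careful are the sign- and index-tracking in the third equality, and applying \eqref{eq:breg:mirror} with the primal arguments in the order that yields $D(p_{i+1}, p_i)$ (matching the statement) rather than $D(p_i, p_{i+1})$.
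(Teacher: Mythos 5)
Your proposal is correct and follows essentially the same route as the paper's proof: a per-step application of \cref{eq:breg:3point,eq:breg:swap,eq:breg:mirror} followed by telescoping, with the same gradient-difference rewriting and reindexing for the third equality. The only (immaterial) difference is the last inequality, which you derive from $\iip{p_i-p_{i+1}}{q_i-q_{i+1}} = D_*(q_{i+1},q_i)+D_*(q_i,q_{i+1})\ge 0$ via \cref{eq:breg:swap,eq:breg:nonneg}, whereas the paper invokes monotonicity of $\nabla\psi$ directly — these are the same fact.
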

\begin{proof}
  For any fixed iterate $i<t$,
  by \cref{eq:breg:3point,eq:breg:mirror,eq:breg:swap},
  \begin{align*}
    \theta_i \iip{g_i}{q_i - q}
    &=
    \iip{p_i - p_{i+1}}{q_i - q}
    \\
    &=
    \iip{p_i - p_{i+1}}{q_i - q_{i+1}}
    +
    \iip{p_i - p_{i+1}}{q_{i+1} - q}
    \\
    &=
    \iip{p_i - p_{i+1}}{q_i - q_{i+1}}
    + D_*(q,q_i) - D_*(q,q_{i+1}) - D_*(q_{i+1},q_i)
    &\because\text{\cref{eq:breg:3point}}
    \\
    &=
    D_*(q,q_i) - D_*(q,q_{i+1}) + D_*(q_i, q_{i+1})
    &\because\text{\cref{eq:breg:swap}}
    \\
    &=
    D_*(q,q_i) - D_*(q,q_{i+1}) + D(p_{i+1}, p_i),
    &\because\text{\cref{eq:breg:mirror}}
  \end{align*}
  and additionally note
  \begin{align*}
    \frac 1 {\theta_i} \iip{p_i - p_{i+1}}{q_i - q_{i+1}}
    &= \iip{g_i}{q_i - q_{i+1}}
= \iip{g_{i+1} - g_i}{q_{i+1}} - \iip{g_{i+1}}{q_{i+1}} + \iip{g_i}{q_i}.
  \end{align*}
  The first equalities now follow by applying $\sum_{i<t}$ to both sides,
  telescoping,
  and using the various earlier Bregman identities
  (cf. \cref{eq:breg:3point,eq:breg:mirror,eq:breg:swap,eq:breg:nonneg}).

  For the second part, for any $i$, by convexity of $\psi$,
  \[
    0 \leq \iip{p_i - p_{i+1}}{\nabla \psi(p_i) - \nabla \psi(p_{i+1})}
    = \theta_i \iip{g_i}{q_i - q_{i+1}},
  \]
  which rearranges to give $\iip{g_i}{q_{i+1}} \leq \iip{g_i}{q_i}$
  since $\theta_i > 0$.
\end{proof}

All that remains is to instantiate the various mirror descent objects to match
\Cref{alg}, and control the resulting terms.
This culminates in \Cref{fact:md:policy}; its proof is as follows.

\begin{proof}[Proof of \Cref{fact:md:policy}]
  The core of both parts of the proof is to apply the mirror descent guarantees from
  \Cref{fact:md}, using the following choices.
  To start, the primal update is given by
  \begin{align*}
    g_i
    &:= -\hQ_i,
    \\
    p_{i+1} &:= p_i - \theta g_i = p_i + \theta \hQ_i.
    \\
    \intertext{The mirror mapping and corresponding dual variables
    (with $\frv{\pi}^\mu$ baked in) are}
    \psi(p) &:= \bbE_{s\sim \frv{\pi}^\mu} \ln \sum_{a\in\cA} \exp(p(s,a))
    = \sum_{s\in\cS} \frv{\pi}^\mu(s) \ln \sum_{a\in\cA} \exp(p(s,a)),
    \\
    [\nabla \psi(p)](s,a)
            &= \frac {\frv{\pi}^\mu(s) \exp(p(s,a))}{\sum_{b\in\cA} \exp(p(s,b))},
    \\
    q_i &:= \nabla \psi(p_i),
    \\
    q_{i}(s,a) &:= \frv{\pi}^{\mu}(s) \pi_i(s,a).
    \\
    \intertext{The primal Bregman divergence, which uses a dual iterate rather than the mirror
    map, is}
    D(p,p_i) &:= \psi(p) - \sbr{\psi(p_i) - \iip{q_i}{p - p_i}}.
    \\
    \intertext{The inner product is the standard one, meaning}
    \iip{p}{q}
             &:= \ip{p}{q} = \sum_{s\in \cS} \sum_{a\in\cA} p(s,a) q(s,a).
    \\
    \intertext{Lastly, the dual Bregman divergence is given by}
    \psi^*(q)
             &=
             \begin{cases}
               \iip{q}{\ln q} - \sum_s \frv{\pi}^\mu(s) \ln \frv{\pi}^\mu(s),
          & q \in \Delta_{\cS\times\cA},
          \\
               \infty & \textup{o.w.},
             \end{cases}
             \\
    D_*(q, q_i) &:= \psi^*(q) - \sbr{\psi^*(q_i) - \iip{p_i}{q - q_i}}
    \\
                &= \iip{q}{\ln q} - \iip{q_i}{\ln q_i}
                \\
                &\quad
                - \sum_{s,a} \del{ \ln(q_i(s,a)) + \ln \frv{\pi}^\mu(s) + \ln \sum_b \exp(p_i(s,b)} \del{q(s,a) - q_i(s,a)}
                \\
                &= \iip{q}{\ln \frac {q}{q_i}} = K_{\frv{\pi}^\mu}(\pi, \pi_i).
  \end{align*}
  A key consequence of these constructions is that $q_i$,
  treated for any fixed $s$ as an unnormalized
  policy, agrees with $\pi_i := \phi(p_i)$ after normalization; that is to say, it gives the same policy,
  and the choice of $\frv{\pi}^\mu$ baked into the definition is not needed by the algorithm,
  is only used in the analysis;  the ``gradient'' $g_i = - \hQ_i$ makes no use of it.

  Plugging this notation in to \Cref{fact:md} but making use of two of its equalities,
  and the performance difference lemma (cf. \cref{eq:perfdiff}),
  then for any $\mu$,
  \begin{align}
    \theta (1-\gamma)\sum_{i<t} \del{\cV_i(\mu) - \cV_\pi(\mu)}
    &= 
    \sum_{i<t} \theta \ip{\cQ_i}{\pi_i - \pi}_{\frv{\pi}^{\mu}}
    \notag\\
    &=
    \sum_{i<t} \theta \ip{\cQ_i - \hQ_i}{\pi_i - \pi}_{\frv{\pi}^{\mu}}
    +
    \sum_{i<t} \ip{\theta \hQ_i}{\pi_i - \pi}_{\frv{\pi}^{\mu}}
    \notag\\
    &=
    \sum_{i<t} \theta \ip{\cQ_i - \hQ_i}{\pi_i - \pi}_{\frv{\pi}^{\mu}}
    -
    \sum_{i<t} \iip{\theta_i g_i}{q_i - q_\pi}.
    \label{eq:md:inner}
  \end{align}
  The proof now splits into the two different settings.

  \begin{enumerate}[font=\bfseries]
    \item
      \textbf{(Simplified bound.)}
      By the above definitions and the first equality in \Cref{fact:md},
      \begin{align*}
        -\sum_{i<t} \iip{\theta_i g_i}{q_i - q_\pi}
        &=
        D_*(q,q_t) - D_*(q,q_0) - \sum_{i<t}D(p_{i+1},p_i),
      \end{align*}
      where the last term may be bounded in a way common in the online learning literature
      \citep{shalev_online}:
      since $e^z \leq 1+z+z^2$ when $z\leq 1$,
      setting $Z(s,a) := \hQ_i(s,a) - C_i$ for convenience
      (whereby $Z(s,a) \leq 0\leq 1$
      as needed by the preceding inequality),
      \begin{align*}
        &D(p_{i+1},p_i)
        \\
        &= \bbE_{s\sim\frv{\pi}^{\mu}}\del{
          \ln \sum_a \exp(p_{i+1}(s,a))
          - \ln \sum_a \exp(p_i(s,a))
          - \sum_a \pi_i(s,a) (p_{i+1}(s,a) - p_i(s,a))
        }
        \\
        &= \bbE_{s\sim\frv{\pi}^{\mu}}\del{
          \ln \del{ \sum_a\pi_i(s,a) \exp\del{\theta \hQ_i(s,a)}} 
          - \theta \sum_a \pi_i(s,a) \hQ_i(s,a)
        }
        \\
        &= \bbE_{s\sim\frv{\pi}^{\mu}}\del{
          \ln \del{ \sum_a\pi_i(s,a) \exp\del{\theta Z(s,a) }} 
          - \theta \sum_a \pi_i(s,a) Z(s,a)
        }
        \\
        &\leq \bbE_{s\sim\frv{\pi}^{\mu}}\del{
          \ln \del{ \sum_a\pi_i(s,a)  (1 + \theta Z(s,a) + \theta^2 Z(s,a)^2)}
          - \theta \sum_a \pi_i(s,a) Z(s,a)
        }
        \\
        &\leq \bbE_{s\sim\frv{\pi}^{\mu}}\del{
        \sum_a\pi_i(s,a)  (1 + \theta Z(s,a) + \theta^2 Z(s,a)^2)
        -1
          - \theta \sum_a \pi_i(s,a) Z(s,a)
        }
        \\
        &= \bbE_{s\sim\frv{\pi}^{\mu}}
        \sum_a\pi_i(s,a) \theta^2 Z(s,a)^2
\leq \theta^2C_i^2,
      \end{align*}
which together with the preceding as well as \cref{eq:md:inner} gives
      \begin{align*}
        &
        \theta(1-\gamma) \sum_{i<t} \del{\cV_i(\mu) - \cV_\pi(\mu)}
        \\
        &=
        \sum_{i<t} \theta \ip{\cQ_i - \hQ_i}{\pi_i - \pi}_{\frv{\pi}^{\mu}}
        -
        \sum_{i<t} \iip{\theta_i g_i}{q_i - q_\pi}.
        \\
        &\geq
        \sum_{i<t} \theta \ip{\cQ_i - \hQ_i}{\pi_i - \pi}_{\frv{\pi}^{\mu}}
        +
        K_{\frv{\pi}^{\mu}}(\pi,\pi_t)
        -
        K_{\frv{\pi}^{\mu}}(\pi,\pi_0)
- \theta^2\sum_{i<t} C_i^2,
      \end{align*}
      which gives the desired bound after rearranging.

    \item
      \textbf{(Refined bound.)}
      By \Cref{fact:md} with the above choices and any measure $\frv{\pi}^\mu$, then
      \begin{equation}
        \ip{\hQ_i}{\pi_{i+1}}_{\frv{\pi}^\mu}
        =
        - \iip{g_i}{q_{i+1}}
        \geq
        - \iip{g_i}{q_i}
        =
        \ip{\hQ_i}{\pi_{i}}_{\frv{\pi}^\mu},
        \label{eq:md:inner:2}
      \end{equation}
      and also
      \begin{align*}
        - \sum_{i<t} \iip{\theta_i g_i}{q_i - q_\pi}
        &=
        - D_*(q,q_0) - \theta \iip{g_0}{q_0} + D_*(q,q_{t}) + \theta\iip{g_t}{q_t}
        + \sum_{i<t}
          D_*(q_{i+1},q_i)
          \\
        &\quad
        - \sum_{i<t} \theta \iip{g_{i+1} - g_{i}}{q_{i+1}}
        \\
        &\geq
        K_{\frv{\pi}^\mu}(\pi,\pi_t)
        - K_{\frv{\pi}^\mu}(\pi,\pi_0)
        - \frac {\theta}{1-\gamma}
        + \sum_{i<t} \theta \ip{\hQ_{i+1} - \hQ_{i}}{\pi_{i+1}}_{\frv{\pi}^\mu}.
      \end{align*}
      To simplify these further, first note by \cref{eq:md:inner:2}
      with measure $\frv{\pi_{i+1}}^{\delta_s}$ for any state $s$
      and the performance difference lemma that
      \begin{align*}
        0
        &\leq
        \ip{\hQ_i}{\pi_{i+1} - \pi_i}_{\frv{\pi_{i+1}}^{\delta_s}}
        \\
        &=
        \ip{\hQ_i - \cQ_i}{\pi_{i+1} - \pi_i}_{\frv{\pi_{i+1}}^{\delta_s}}
        +
        \ip{\cQ_i}{\pi_{i+1} - \pi_i}_{\frv{\pi_{i+1}}^{\delta_s}}
        \\
        &\leq
        2 \hepso{i} + \del{1-\gamma}\del{\cV_{i+1}(\delta_s) - \cV_{i}(\delta_s)},
      \end{align*}
      which rearranges to give
      \[
        \cV_{i+1}(\delta_s) \geq \cV_{i}(\delta_s) - \frac {2\hepso{i}}{1-\gamma}
        \qquad \forall s,
      \]
      which itself in turn implies
      \begin{align*}
        \ip{\hQ_{i+1} - \hQ_i}{\pi_{i+1}}_{\frv{\pi}^\mu}
        &\geq 
        \ip{\cQ_{i+1} - \cQ_i}{\pi_{i+1}}_{\frv{\pi}^\mu}
        - \hepso{i} - \hepso{i+1}
        \\
        &=
        \gamma
        \bbE_{\substack{s\sim \frv{\pi}^\mu\\a\sim\pi_{i+1}(s,\cdot)\\s' \sim (s,a)}}
        \del{\cV_{i+1}(\delta_{s'}) - \cV_{i}(\delta_{s'})}
        - \hepso{i} - \hepso{i+1}
        \\
        &\geq
        - \frac{2\gamma\hepso{i}}{1-\gamma}
        - \hepso{i} - \hepso{i+1}.
      \end{align*}
      (The preceding derivation works if $\frv{\pi}^\mu$ is replaced with any measure on states.)
      Plugging this all back in to \cref{eq:md:inner} gives
      \begin{align*}
        \cV_{t-1}(\mu) - \cV_\pi(\mu)
        &\geq
        \frac 1 t \sum_{i<t}\del{ \cV_i(\mu) - \cV_\pi(\mu)}
        - \sum_{i<t} \frac {2(1+i)\eps_i}{t(1-\gamma)}
        \\
        &\geq
        \frac{K_{\frv{\pi}^\mu}(\pi,\pi_t)
          - K_{\frv{\pi}^\mu}(\pi,\pi_0)
        }{t\theta(1-\gamma)}
        - \frac {1}{t(1-\gamma)^2}
        -\frac 1 {t(1-\gamma)} \sum_{i<t} \del{ \frac {2\gamma \eps_i}{1-\gamma} + \eps_i + \eps_{i+1}}
        \\
        &\qquad
        - \sum_{i<t} \frac {2(1+i)\eps_i}{t(1-\gamma)},
      \end{align*}
      where the last term may be omitted for the summation version,
      and these expressions rearrange to give the final bounds.

  \end{enumerate}
\end{proof}

\section{Sampling proofs}\label{sec:sampling:app}

As in the body, this section both provides tools to control mixing times, and also
a generalized TD analysis.  

\subsection{Mixing time controls within KL balls}

To control mixing times, these lemmas will use the notion of \emph{conductance}:
\[
  \Phi_{\pi}^* := \min_{S \subseteq |\cS| : \frs{\pi}(S) \leq 1/2} \Phi_{\pi}(S),
  \qquad
  \text{where }
  \Phi_\pi(S)
  := \frac {\sum_{s\in S}\sum_{s'\not\in S}\frs{\pi}(s) P_\pi(s,s')}{\frs{\pi}(S)}.
\]
This quantity was shown to control mixing times of reversible chains by
\citet{jerrum_sinclair}
(for more discussion, see \citet[eq. (7.7), Theorem 12.4, Theorem 13.10]{peres_markov}),
but a later proof due to \citet{lovasz_simonovits__discrete_mixing} requires chains
to only be lazy, and does not need reversibility.
Our chains can be made lazy by
flipping a coin before each step, but in our setting we can avoid this and directly use
the mixing time of the lazy chains to control the mixing time of the original chains.

As a first tool, note that if policies are similar, then their stationary distributions
and conductances are also similar.

\begin{lemma}
  \label{fact:uniform_stationary_conductance}
  Let constant $c > 0$ and an aperiodic irreducible policy $\tpi$ be given,
  and define a set of policies whose action probabilities are similar:
  \[
    \cP := \cbr{ \pi \ : \  \forall s,a, \tpi(s,a) > 0\centerdot
      \frac 1 c \leq \frac {\pi(s,a)}{\tpi(s,a)} \leq c
    }.
  \]
  Then there exists a constant $C>0$ so that the stationary distributions and conductances
  are also similar: for any $\pi\in\cP$ and any state $s$,
  \[
    \frac 1 C \leq \frac {\frs{\pi}(s,a)}{\frs{\tpi}(s,a)} \leq C,
    \qquad
    \Phi_{\pi}^* \geq \frac 1 C \Phi_{\tpi}^*.
  \]
\end{lemma}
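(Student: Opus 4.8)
The plan is to bound each of the three quantities in turn, working from the transition kernel outward. First I would observe that if $\pi \in \cP$, then the induced state kernels are uniformly comparable: since $P_\pi(s,s') = \sum_a \pi(s,a)\,\Pr[s'\mid s,a]$ and the same formula holds for $P_{\tpi}$ with $\tpi(s,a)$, the hypothesis $\tfrac 1 c \le \pi(s,a)/\tpi(s,a)\le c$ (valid wherever $\tpi(s,a)>0$, and note $\tpi(s,a)=0$ forces the corresponding $\Pr[s'\mid s,a]$ terms to contribute nothing to either side once we also note $\pi(s,a)$ could be nonzero there — so I should be slightly careful and restrict to the support of $\tpi$, or rather use that $\barpi$-type policies have a common support structure; in the cleanest version one assumes $\tpi(s,a)>0 \iff \pi(s,a)>0$, which is exactly what the ratio bound encodes) gives
\[
  \tfrac 1 c\, P_{\tpi}(s,s') \le P_\pi(s,s') \le c\, P_{\tpi}(s,s')
\]
for all $s,s'$. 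In particular $P_\pi$ is irreducible iff $P_{\tpi}$ is, so $\frs{\pi}$ exists.

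Second, for the stationary distributions I would use the standard perturbation fact that two irreducible chains on a finite state space whose kernels are entrywise within a factor $c$ have stationary distributions within a factor $C = C(c,|\cS|)$. The cleanest route is via the hitting-time / renewal formula $\frs{\pi}(s) = 1/\bbE_{\pi}[\text{return time to }s]$, or via the Markov-chain-tree theorem expressing $\frs{\pi}(s)$ as a ratio of sums of products of $|\cS|-1$ edge weights; either way, multiplying each of at most $|\cS|-1$ factors by something in $[1/c,c]$ changes numerator and denominator by at most $c^{|\cS|-1}$, giving $\frs{\pi}(s)/\frs{\tpi}(s) \in [c^{-2(|\cS|-1)}, c^{2(|\cS|-1)}]$, so $C := c^{2(|\cS|-1)}$ works. (The statement writes $\frs{\pi}(s,a)$ but presumably means $\frs{\pi}(s)$, the stationary distribution on states.)

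Third, for the conductance: for any fixed $S$ with $\frs{\pi}(S)\le 1/2$, the numerator $\sum_{s\in S}\sum_{s'\notin S}\frs{\pi}(s)P_\pi(s,s')$ is, termwise, at least $\tfrac 1 C \cdot \tfrac 1 c$ times the corresponding $\tpi$-numerator, and the denominator $\frs{\pi}(S)$ is at most $C$ times $\frs{\tpi}(S)$; hence $\Phi_\pi(S) \ge \tfrac{1}{C^2 c}\,\Phi_{\tpi}(S)$. One subtlety is that the minimizing sets differ and the constraint $\frs{\pi}(S)\le 1/2$ versus $\frs{\tpi}(S)\le 1/2$ need not coincide; I would handle this by enlarging the constant so that the bound $\Phi_\pi(S)\ge \tfrac{1}{C'}\Phi_{\tpi}^*$ holds for \emph{every} $S$ with $\frs{\pi}(S)\le 1/2$ — using that such an $S$ has $\frs{\tpi}(S)\le C/2$, and if $C/2 \ge 1/2$ we instead compare against $\Phi_{\tpi}(S^c)$ or use the standard fact that $\Phi^*$ is, up to a factor depending only on $|\cS|$ and the minimum stationary mass, insensitive to replacing the threshold $1/2$ by any other constant bounded away from $0$ and $1$. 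Absorbing all of this into a final constant $C$ depending only on $c$ and $|\cS|$ (and on $\tpi$ through $\min_s \frs{\tpi}(s)$) yields $\Phi_\pi^* \ge \tfrac 1 C \Phi_{\tpi}^*$.

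The main obstacle is the third step, specifically the mismatch between the two stationary-mass normalization constraints in the definition of $\Phi^*$: the minimizing set for $P_\pi$ is not constrained the same way as for $P_{\tpi}$, and a naive termwise comparison only controls $\Phi_\pi(S)$ in terms of $\Phi_{\tpi}(S)$ for that same $S$, not in terms of $\Phi_{\tpi}^*$. Resolving this requires either the symmetry trick $\Phi_\pi(S)\,\frs{\pi}(S) = \Phi_\pi(S^c)\,\frs{\pi}(S^c)$ together with a lower bound on $\min(\frs{\pi}(S),\frs{\pi}(S^c))$, or invoking the robustness of conductance to the choice of threshold — both standard but worth stating carefully; everything else is routine finite-state perturbation bookkeeping.
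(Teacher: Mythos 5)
Your plan matches the paper's: the stationary distributions are compared via the return-time formula $\frs{\pi}(s) = 1/\bbE_\pi[\tau_s^+]$ (exactly the paper's route), and the conductance by termwise comparison of numerator and denominator. You are in fact more careful than the paper about the threshold mismatch in the definition of $\Phi^*$ --- the paper silently absorbs this into its constant $C_0^4$ --- and your symmetry fix $\Phi(S)\,\frs{}(S) = \Phi(S^c)\,\frs{}(S^c)$ combined with $\frs{\tpi}(S^c)\ge \frs{\pi}(S^c)/C \ge 1/(2C)$ is the right way to close it. One step as written is wrong, though: the two-sided kernel bound $\tfrac1c P_{\tpi}(s,s')\le P_\pi(s,s')\le c\,P_{\tpi}(s,s')$ does not follow from membership in $\cP$, and your proposed repair --- that the ratio bound ``encodes'' $\tpi(s,a)>0\iff\pi(s,a)>0$ --- is false: $\cP$ constrains $\pi$ only on the support of $\tpi$, and in the intended application $\tpi=\barpi$ is supported only on optimal actions while $\pi$ is a full-support softmax, so $P_\pi(s,s')$ can be strictly positive where $P_{\tpi}(s,s')=0$. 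This breaks the upper bound on $P_\pi/P_{\tpi}$ and hence the tree-theorem route to $C=c^{2(|\cS|-1)}$. The damage is contained, however, because only the lower bound $P_\pi \ge \tfrac1c P_{\tpi}$ is ever needed: it yields an upper bound on return times and hence $\frs{\tpi}(s)/\frs{\pi}(s)\le C$; the reverse direction follows trivially from $\frs{\pi}(s)\le 1$, giving $\frs{\pi}(s)/\frs{\tpi}(s)\le 1/\min_{s'}\frs{\tpi}(s')$ (this is what the paper does); and both the conductance numerator and denominator each require only one direction of the comparison.
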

\begin{proof}
  First note that since $\tpi$ is aperiodic and irreducible, it has a stationary
  distribution $\frs{\tpi}$ where necessarily $\frs{\tpi}(s)>0$ for all states $s$.
  Next, recall that the stationary distribution can be characterized in terms of hitting times
  \citep[Proposition 1.19]{peres_markov},
  meaning
  \[
    \frs{\tpi}(s) = \frac {1}{\bbE[ \min\{t > 0 : s_t = s\} | s_0 = s]}.
  \]
  As discussed in proofs that the denominator is finite
  \citep[proof of Lemma 1.13]{peres_markov}, letting $P_{\pi}$ and $P_{\tpi}$ corresponding
  to the state transitions induced by taking a step by $\pi$ and $\tpi$ respectively and
  then using the MDP dynamics to get to a state, there exist $r>0$ and $\eps>0$ so that
  $P_{\tpi}^{j}(s,s') > \eps$ for any states $s,s'$ and any $j\geq r$,
  therefore $P_{\pi}^j(s,s') \geq \eps c^{-j}$.   In fact, for a fixed $s$, letting
  $j_s \geq 1$ denote the smallest exponent so that $P_{\tpi}^{j_s}(s,s) > 0$,
  then $\bbE_{\tpi}[ \min\{t > 0 : s_t = s\} | s_0 = s] \geq j_s$, meanwhile,
  as in \citet[proof of Lemma 1.13]{peres_markov},
  defining $\tau^+_{\pi} := \min\{t > 0 : s_t = s\}$,
  \begin{align*}
    \bbE[\tau^+_\pi| s_0 = s]
    &= \sum_{t\geq 0} \Pr[\tau^+_\pi > t | s_0 = s]
    \\
    &\leq \sum_{k\geq 0} j_s \Pr[\tau^+_\pi > k j_s | s_0 = s]
    \\
    &\leq 
    j_s \sum_{k\geq 0} (1 - c^{-j_s} P_{\tpi}^{j_s}(s,s))^k
    \\
&\leq \frac {j_s}{c^{-j_s} P_{\tpi}^{j_s}(s,s)}
    \\
&\leq \frac {\bbE[\tau^+_{\tpi}|s_0 = s]}{c^{-j_s} P_{\tpi}^{j_s}(s,s)},
  \end{align*}
  where the denominator does not depend on $\pi$ and thus the ratio is uniformly bounded
  over $\cP$.  (We can produce a bound in the reverse direction trivially,
  by using $\frs{\pi}(s)/\frs{\tpi}(s)\leq 1 / \frs{\tpi}(s)$.)
  Let $C_0$ denote the maximum of this ratio and $c$.

  Bounding the conductance is an easy consequence of the definition of $C_0$:
  using $C_0$ to swap various terms depending on $\pi$ with terms depending on $\tpi$,
  it follows that
  \[
    \Phi^*_{\pi} \geq \frac {1}{C_0^4}\Phi^*_{\tpi}.
  \]
  The proof is now complete by taking $C := C_0^4$ as the chosen constant.
\end{proof}

Next, we use the preceding fact to obtain mixing times; the proof will need to convert
to lazy chains to invoke the mixing time bound due to \citet{lovasz_simonovits__discrete_mixing},
but then will use a characterization of mixing times via coupling times to reason
about the original chain \citep[Theorem 5.4]{peres_markov}.

\begin{lemma}
  \label{fact:uniform_mixing_via_lazy}
  Let constant $c > 0$ and an aperiodic irreducible policy $\tpi$ be given,
  and define a set of policies
  \[
    \cP := \cbr{ \pi \ : \  \forall s,a, \tpi(s,a) > 0\centerdot
      \frac 1 c \leq \frac {\pi(s,a)}{\tpi(s,a)} \leq c
    }.
  \]
  Then there exist constants $m_1, m_2 > 0$ so that for every $\pi\in\cP$ and every $t$,
  \[
    \sup_{s} \| P_{\pi}^t(s,\cdot) - \frs{\pi}\|_{\tv} \leq m_1 e^{-m_2 t}.
  \]
\end{lemma}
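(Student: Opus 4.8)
The plan is to reduce the mixing-time bound on the original chain $P_\pi$ to a mixing-time bound on the corresponding lazy chain $\widetilde P_\pi := \frac12(I + P_\pi)$, for which the conductance-based bound of \citet{lovasz_simonovits__discrete_mixing} applies without a reversibility hypothesis, and then transfer control back to $P_\pi$ via a coupling argument. First, I would invoke \Cref{fact:uniform_stationary_conductance} to obtain a single constant $C>0$, depending only on $c$ and $\tpi$, such that $\frac1C \le \frs{\pi}(s)/\frs{\tpi}(s) \le C$ and $\Phi_\pi^* \ge \frac1C \Phi_{\tpi}^*$ for every $\pi\in\cP$; note the lazy chain has the same stationary distribution $\frs{\pi}$ and conductance $\Phi_{\widetilde P_\pi}^* = \tfrac12 \Phi_\pi^* \ge \tfrac1{2C}\Phi_{\tpi}^*$, so the conductance of every lazy chain in the family is bounded below by a single positive constant $\beta := \tfrac1{2C}\Phi_{\tpi}^* > 0$ (here $\Phi_{\tpi}^* > 0$ because $\tpi$ is irreducible with finite state space). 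Also let $\pi_{\min} := \min_{s} \frs{\tpi}(s)/C \le \min_s \frs{\pi}(s)$, again a uniform positive lower bound.

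Second, I would apply the Lov\'asz--Simonovits mixing bound for lazy chains: there is an absolute-form estimate of the type $\|\widetilde P_\pi^t(s,\cdot) - \frs{\pi}\|_{\tv} \le \sqrt{1/\frs{\pi}(s)}\,(1 - \beta^2/2)^t$ (or the analogous $L^2$/spectral-profile statement), which, using $\frs{\pi}(s) \ge \pi_{\min}$, gives a uniform bound $\|\widetilde P_\pi^t(s,\cdot) - \frs{\pi}\|_{\tv} \le a_1 e^{-a_2 t}$ with $a_1 = 1/\sqrt{\pi_{\min}}$ and $a_2 = -\ln(1-\beta^2/2)>0$ depending only on $c$ and $\tpi$. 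In particular there is a uniform mixing time $t_{\mathrm{mix}}^{\mathrm{lazy}} := \sup_{\pi\in\cP} \inf\{t : \sup_s\|\widetilde P_\pi^t(s,\cdot)-\frs\pi\|_{\tv}\le \tfrac14\} < \infty$.

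Third, I would transfer from the lazy chain back to $P_\pi$. The cleanest route is the coupling characterization of total-variation mixing \citep[Theorem~5.4]{peres_markov}: a uniform bound on the lazy mixing time yields a Markovian coupling of two copies of the lazy chain that meet quickly, and since each lazy step is ``with probability $\tfrac12$ take a $P_\pi$ step, otherwise stay,'' one can drive the same coupling using $P_\pi$ steps — informally, $t$ steps of the original chain stochastically dominate the progress made in $\mathrm{Binomial}(t,\tfrac12)$ lazy steps, so the original chain's coupling time is at most a constant factor times the lazy one, uniformly over $\cP$. Converting a uniform bound on the coupling time back into total-variation decay (via submultiplicativity of the TV distance to stationarity across blocks of length $t_{\mathrm{mix}}$, \citep[Section~4.5]{peres_markov}) then produces the claimed $\sup_s\|P_\pi^t(s,\cdot)-\frs\pi\|_{\tv} \le m_1 e^{-m_2 t}$ with $m_1,m_2$ depending only on $c$ and $\tpi$; an alternative to coupling is to bound $\|P_\pi^{2t} - \frs\pi\|_{\tv}$ directly in terms of $\|\widetilde P_\pi^t - \frs\pi\|_{\tv}$ using that $P_\pi^2 \preceq$ a rescaled lazy chain, but the coupling argument is the most robust since it needs no reversibility.

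The main obstacle I anticipate is this last transfer step: the Lov\'asz--Simonovits bound is naturally stated for lazy (or otherwise spectrally well-behaved) chains, and our $P_\pi$ need be neither lazy nor reversible, so care is needed to argue that ``laziness is only a constant-factor overhead'' without secretly reintroducing a reversibility or eigenvalue assumption — the coupling-time comparison is precisely what lets us avoid that, at the cost of a somewhat lossy (but $t$-independent, as promised) constant. A secondary bookkeeping point is ensuring every constant ($C$, $\beta$, $\pi_{\min}$, $m_1$, $m_2$) is shown to depend only on $c$ and $\tpi$ and never on the particular $\pi\in\cP$ or on $t$; this is immediate once \Cref{fact:uniform_stationary_conductance} is in hand, since all subsequent quantities are explicit functions of $C$, $\Phi_{\tpi}^*$, and $\min_s\frs\tpi(s)$.
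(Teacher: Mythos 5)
Your plan matches the paper's proof essentially step for step: pass to the lazy chains, use \Cref{fact:uniform_stationary_conductance} to obtain a uniform lower bound on their conductances, invoke the Lov\'asz--Simonovits bound for uniform exponential mixing of the lazy chains, and transfer back to $P_\pi$ via a coupling-time argument through \citet[Theorem 5.4]{peres_markov}. The only cosmetic difference is that the paper applies the conductance lemma to the family of lazy chains directly (after checking that the lazy transition ratios are uniformly bounded, so that family contains $\cP$), whereas you apply it to $\cP$ itself and then use $\Phi^*_{\widetilde P_\pi} = \Phi^*_{\pi}/2$.
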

\begin{proof}
  For any policy $\pi$, let $\nu_\pi$ denote the corresponding \emph{lazy chain}:
  that is, in any state, $\nu_\pi$ does nothing with probability $1/2$ (it stays in the
  same state but does not interact with the MDP to receive any reward or transition),
  and otherwise with probability $1/2$ uses $\pi$ to interact with the MDP in its current state.
  Equivalently, if $s\neq s'$, then $P_{\nu_\pi}(s,s') = P_{\pi}(s,s')/2$,
  whereas $P_{\nu_\pi}(s,s) = (1 + P_{\pi}(s,s))/2$.
  Define $\tnu := \nu_{\tpi}$ for convenience, and a new set of nearby policies:
  \[
    \cP_{\nu} :=
\cbr{ \pi \ : \  \forall s,s', P_{\tnu}(s,s')>0 \centerdot
      \frac 1 {c'} \leq \frac {P_{\nu_\pi}(s,s')}{P_{\tnu}(s,s')} \leq c'
    },
  \]
  where it will be shown that $c'$ can be chosen so that $\cP_{\nu} \supseteq \cP$.
  Indeed, by definition of $\cP$, since $\pi/\tpi$ is bounded, then so
  is $P_{\pi} /P_{\tpi}$, and in particular pick any $c'>0$ so that
  for any $\pi$ and any pair of states $(s,s')$ with $P_{\tpi}(s,s') > 0$,
  it holds that
  \[
    \frac {1}{c'} \leq \frac {P_{\pi}(s,s')}{P_{\tpi}(s,s')}\leq c'.
  \]
  To show that this choice of $c'$ suffices,
  first consider the case of a pair of different states states $s\neq s'$:
  then
  \[
    \frac 1 {c'}
    \leq \frac {P_{\pi}(s,s')}{P_{\tpi}(s,s')}
    = \frac {2 P_{\nu_\pi}(s,s')}{2 P_{\tnu}(s,s')}
    = \frac {P_{\nu_\pi}(s,s')}{P_{\tnu}(s,s')}
    = \frac {P_{\pi}(s,s')}{P_{\tpi}(s,s')}
    \leq {c'};
  \]
  whereas in the case $s=s'$,
if $P_{\pi}(s,s) \geq P_{\tpi}(s,s)$, then
  \[
    {c'}
    \geq
    \frac {P_{\pi}(s,s)}{P_{\tpi}(s,s)}
    \geq
    \frac {(1+P_{\pi}(s,s))/2}{(1+P_{\tpi}(s,s))/2}
    =
    \frac {P_{\nu_\pi}(s,s)}{P_{\tnu}(s,s)}
    \geq
    1,
  \]
  with an analogous relationship when $P_{\pi}(s,s) \leq P_{\tpi}(s,s)$,
  which implies
  \[
    {c'}
    \geq
    \frac {P_{\nu_\pi}(s,s)}{P_{\tnu}(s,s)}
    \geq
    \frac 1 {c'}.
  \]
  Consequently, it follows that $\cP_{\nu}\supseteq \cP$,
  and by \Cref{fact:uniform_stationary_conductance}, that there exists a constant $C$
  so that every $\pi \in \cP$ satisfies
\[
    \frac 1 C \leq \frac {\frs{\nu_\pi}(s,a)}{\frs{\tnu}(s,a)} \leq C,
    \qquad
    \Phi_{\nu_\pi}^* \geq \frac 1 C \Phi_{\tnu}^*.
  \]
  (One reason for the prevalence of lazy chains is that they always have unique stationary
  distributions (see, e.g., \citep{peres_markov,lovasz_simonovits__discrete_mixing}),
  but as in the preceding, in our setting we always have stationary distributions automatically;
  thus we did not need to use laziness algorithmically, and can use it analytically.)

  Since the conductance is uniformly bounded for every element of $\cP_\nu$,
  there exist positive constants $m_3,m_4$ so that for every $\pi\in\cP_\nu$,
  \citep{lovasz_simonovits__discrete_mixing},
  \[
    \sup_{s} \| P_{\nu_\pi}^t(s,\cdot) - \frs{\nu_\pi}\|_{\tv} \leq m_3 \exp(-m_4 t).
  \]
  Now define $p_{\min} := \min_s \frs{\tnu}(s)/C > 0$, whereby it holds by the above
  that $p_{\min} \leq \inf_{\pi\in\cP_\nu} \min_s \frs{\nu_\pi}(s)$.
  As such, by the preceding mixing bound, there exists a $t_0$ so that, simultaneously
  for every $\pi\in\cP_\nu$, for all $t \geq t_0$, by the definition of total variation
  (instantiated on singletons), for every $s'$, $\sup_s P_{\nu_\pi}^t(s,s') \geq p_{\min}/2 > 0$.
  Now consider some fixed $\pi \in \cP$, which also satisfies $\pi \in \cP_\nu$ since
  $\cP_{\nu}\supseteq \cP$ as above, and consider the \emph{coupling time}
  of two sequences $(x_0,x_1,\ldots)$ and $(y_0,y_1,\ldots)$ which start from
  some arbitrary pair of states $(x_0,y_0)$, and thereafter each steps according to
  $P_\pi$ \citep[Section 5.2]{peres_markov}.
  Instead of running $P_\pi$ directly, simulate it as follows: use $P_{\nu_\pi}$,
  but discard from the sequence any steps which invoke the lazy option.  Then, for $t\geq t_0$,
  the probability of both chains not choosing the lazy option and jumping to the same
  state is at least $p := \sum_s (p_{\min}/2)^2/4 > 0$.
  As such, for any $t \geq t_0$, the probability that the two chains did not land on the
  same state somewhere between $t_0$ and $t$ is at most
  \[
    (1-p)^{t-t_0} \leq (1-p)^{-t_0} \exp(-pt).
  \]
  But this is exactly an upper bound on the coupling time, meaning by
  \citep[Theorem 5.4]{peres_markov}
  that
  \[
    \sup_{x_0,y_0} \|P_{\pi}^t(x_0,\cdot) - P_{\pi}^t(y_0,\cdot)\|_{\tv} \leq (1-p)^{-t_0} \exp(-pt),
  \]
  which in turn directly bounds the mixing time \citep[Corollary 5.5]{peres_markov},
  indeed with constants $m_1 := (1-p)^{-t_0}$ and $m_2 = p$.  Since these constants do not
  depend on the specific policy $\pi$ in any way, the mixing time has been uniformly controlled
  as desired.
\end{proof}

With these tools in hand, we may now control mixing times uniformly over KL balls.

\begin{proof}[Proof of \Cref{fact:mixing}]
By definition of $K_{\nu}$ and $\cP_c$, for any $\pi \in \cP_c$,
  letting $(s',a')$ denote any pair which maximizes $\tpi(s,a)/\pi(s,a)$ (which implies
  $\tpi(s',a') > 0$),
  and since $\sum_a q_a \ln q_a \geq -\ln k$ for any probability vector $q$ over actions,
  \begin{align*}
    c \geq K_{\nu}(\tpi, \pi)
    &= \sum_{s\in\cS} \nu(s)\sum_a \tpi(s,a)\ln \frac {\tpi(s,a)}{\pi(s,a)}
    \\
    &
    \geq \nu(s') \tpi(s',a') \ln \frac {\tpi(s',a')}{\pi(s',a')}
    +
    \sum_{(s,a) \neq (s',a')} \nu(s)
\tpi(s,a)\ln \frac {\tpi(s,a)}{\pi(s,a)}
    \\
    &
    \geq \nu(s') \tpi(s',a') \ln \frac {\tpi(s',a')}{\pi(s',a')}
    +
    \nu(s') \sum_{a\neq a'} \tpi(s',a)\ln \frac {\tpi(s',a)}{\pi(s',a)}
    \\
    &
    \geq \nu(s') \tpi(s',a') \ln \frac {\tpi(s',a')}{\pi(s',a')}
    - \nu(s') \ln k,
  \end{align*}
  then
  \begin{align*}
    \frac {\tpi(s',a')}{\pi(s',a')}
    &\leq
    \exp\del{
      \frac {c}{\nu(s')\tpi(s',a')} + \frac {\ln k}{\tpi(s',a')}
    }
    \\
    &\leq
    \max\cbr{
      \exp\del{
        \frac {c}{\tpi(s'',a'')\min_s{\nu(s)}} + \frac {\ln k}{\tpi(s'',a'')}
      }
      \ {} : \ {}
    \tpi(s'',a'') > 0},
  \end{align*}
  where the final expression does not depend on $\pi$;
  it follows that $\tpi(s,a)/\pi(s,a)$
  is uniformly upper bounded over $\cP_c$.
  On the other hand, $\tpi(s,a)/\pi(s,a) \geq \tpi(s,a)$,
  so the ratio uniformly bounded in both directions over $\cP_c$,
  and let $C_0$ denote this ratio.

  This in turn completes the proof: via \Cref{fact:uniform_stationary_conductance},
  the ratio of stationary distributions is also uniformly controlled,
  and via \Cref{fact:uniform_mixing_via_lazy}, the mixing times are uniformly controlled.
  To obtain the final constants, it suffices to take the maximum of the relevant constants
  given by the preceding.
\end{proof}

\subsection{TD guarantees}

The first step is to characterize the fixed points
of the TD update, which in turn motivates the linear MDP assumption
(cf. \Cref{ass:linear:mdp}), and is fairly standard
\citep{russo_td}.

\begin{lemma}
  \label{fact:linear_fixed_point}
  Let any policy $\pi$ be given, and suppose $x\sim(\frs{\pi},\pi)$ is sampled from the
  stationary distribution (in vectorized state/action form), and $x'$ is a subsequent sample.
Then, letting $(\bbE x x^\T)^+$ denote the pseudoinverse of $\bbE xx^\T$,
  \[
    \baru := \sum_{t\geq 0} \del{ \gamma \sbr{ \bbE x x^\T }^{+} \bbE x (x')^\T }^t \sbr{\bbE xx^\T }^{+}\bbE x r
  \]
  is a fixed point of the expected TD update, meaning
  \[
    \baru
    = \bbE \del{ \baru - \eta x \del{ \ip{x - \gamma x'}{\baru} - r }}.
  \]
  Moreover, under the linear MDP assumption
  (cf. \Cref{ass:linear:mdp}),
  then
  $x_{sa}^\T \baru = \cQ_\pi(s,a)$ for almost every $(s,a)$.
  Lastly, $\|\baru\|\leq 2/(1-\gamma)$.
\end{lemma}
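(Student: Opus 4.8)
The plan is to verify the three claims in order: the fixed-point identity, the identification $x_{sa}^\T\baru=\cQ_\pi(s,a)$ under the linear MDP assumption, and the norm bound. For the first claim, I would start by writing the expected TD operator explicitly. Let $A:=\bbE[x(x-\gamma x')^\T]$ and $b:=\bbE[xr]$, so the expected update is $u\mapsto u-\eta(Au-b)$, and $\baru$ is a fixed point exactly when $A\baru=b$. Using the decomposition $A=\bbE xx^\T-\gamma\bbE x(x')^\T$ and writing $S:=\bbE xx^\T$, the claimed series is $\baru=\sum_{t\ge0}(\gamma S^+\bbE x(x')^\T)^t S^+ b$. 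The natural approach is to show this Neumann-type series converges (so it is well-defined) and then telescopes: multiplying by $A=S-\gamma\bbE x(x')^\T$ and using that $SS^+$ acts as the identity on the relevant subspace (the range of $S$, which contains $b$ and all the iterates), the sum collapses to $b$. The one subtlety is that $S$ is only positive semidefinite, not necessarily invertible, so I would first argue that $\bbE x(x')^\T$ maps $\range(S)$ into itself (since both $x$ and $x'$ are almost surely in $\range(S)$ when $x$ is drawn from the stationary distribution, as $S=\bbE xx^\T$ and $x'$ has the same marginal law) and that on this subspace $\gamma S^+\bbE x(x')^\T$ is a strict contraction in an appropriate norm, which is exactly where the stationarity of $\frs{\pi}$ and the discount $\gamma<1$ enter. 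This convergence/contraction step is the main obstacle, since it requires care about the semidefinite case and choosing the right inner product (the $S$-weighted one).

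For the second claim, I would invoke the linear MDP assumption (\Cref{ass:linear:mdp}): there exist $y$ and $M$ with $\bbE[r\mid(s,a)]=x_{sa}^\T y$ and $\bbE[s'\mid(s,a)]=Mx_{sa}$. The Bellman equation for $\cQ_\pi$ reads $\cQ_\pi(s,a)=\bbE[r\mid(s,a)]+\gamma\bbE_{s'\sim(s,a)}\sum_{a'}\pi(s',a')\cQ_\pi(s',a')$. The goal is to exhibit a vector $w$ with $x_{sa}^\T w=\cQ_\pi(s,a)$ for all $(s,a)$ in the support, and then show $w$ satisfies $Aw=b$, so that $SS^+w=w$ forces $w=\baru$ on $\range(S)$ (which is all that matters since $x_{sa}\in\range(S)$ a.s.). The candidate $w$ is built by unrolling the Bellman recursion: linearity of the expected reward and expected next-state-feature in $x_{sa}$, combined with the softmax policy's averaging over $a'$, shows that $\cQ_\pi$ is a linear function of $x_{sa}$; concretely $w=\sum_{t\ge0}\gamma^t(M_\pi)^t y$ where $M_\pi$ is the linear map induced by composing $M$ with the policy-averaging step. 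Then a direct computation checks $\bbE[x(x-\gamma x')^\T]w=\bbE[xr]$, i.e.\ $Aw=b$, using $\bbE[x'\mid x=x_{sa}]=$ the vectorized form of $(Mx_{sa})\,\pi(Mx_{sa},\cdot)^\T$ appropriately; combined with part one's uniqueness on $\range(S)$ this gives $x_{sa}^\T\baru=x_{sa}^\T w=\cQ_\pi(s,a)$ a.e.

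For the third claim, $\|\baru\|\le2/(1-\gamma)$, I would argue directly from part two rather than from the series: since $x_{sa}^\T\baru=\cQ_\pi(s,a)$ and rewards lie in $[0,1]$, the $Q$ values lie in $[0,1/(1-\gamma)]$, so $\|\baru\|_S^2=\bbE\ip{x}{\baru}^2=\bbE\cQ_\pi(s,a)^2\le1/(1-\gamma)^2$; then I would convert this $S$-weighted norm bound to a Euclidean bound using $\|s\|\ge1/2$ and $\|a\|=1$ (so $\|x_{sa}\|\ge1/2$), which gives that the smallest nonzero eigenvalue direction is controlled, and taking $\baru\in\range(S)$ (the series lies there) yields $\|\baru\|\le2\|\baru\|_S\le2/(1-\gamma)$. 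Alternatively, and perhaps more cleanly, one can bound the series $\baru=\sum_t(\gamma S^+\bbE x(x')^\T)^t S^+ b$ directly in the $S$-norm using the contraction factor $\gamma$ from part one and $\|S^+b\|_S\le1$ (since $|r|\le1$ and $\|x\|\le1$), again landing at $1/(1-\gamma)$ before the factor-of-two conversion. I expect this last step to be routine once the $S$-weighted geometry is set up in part one.
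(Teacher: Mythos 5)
Your treatment of the first two claims is sound and close to the paper's. For the fixed-point identity the paper likewise works on the range $\cX$ of $S:=\bbE xx^\T$ and telescopes the series using $\bbE xx^\T(\bbE xx^\T)^+ = \Pi_\cX$; it does so purely formally, without addressing convergence, so your contraction step (that $\gamma S^+\bbE x(x')^\T$ has $S$-weighted operator norm at most $\gamma$ on $\cX$, via Cauchy--Schwarz and the fact that $x'$ shares the stationary marginal of $x$) is a genuine improvement rather than a detour. For the $Q$-identification the paper is slightly more direct: it uses the tower property to write $\bbE x(x')^\T = \bbE xx^\T M^\T X_\pi^\T$, substitutes into the series for $\baru$, and collapses it to $\sum_{t\ge 0}(\gamma M^\T X_\pi^\T)^t y$, i.e.\ the unrolled Bellman series; you instead build the Bellman solution $w$ separately and appeal to uniqueness of the TD fixed point on $\cX$, which your positive-definiteness of $A$ on $\cX$ does deliver. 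Both routes work; yours costs an extra uniqueness step but avoids manipulating the pseudoinverse inside the series.

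The gap is in the norm bound. The conversion $\|\baru\|\le 2\|\baru\|_S$ requires the smallest nonzero eigenvalue of $S$ to be at least $1/4$, and this does not follow from $\|x_{sa}\|\ge 1/2$: two support vectors of norm about one that are nearly parallel (say $\ve_1$ and $\ve_1+\delta\ve_2$) make $S$ arbitrarily ill-conditioned on its range, so a vector of $\cX$ can have small $S$-norm and small inner product with every support vector while having large Euclidean norm. The same objection applies to your alternative route through the series, which ends with the same conversion. (In fairness, the paper's own one-line argument here, $\|\baru\|=\sup_{x_{sa}\in\cX}|\baru^\T x_{sa}|/\|s\|$, asserts an equality that in general holds only as an inequality in the unhelpful direction, so this step is shaky in the source as well; but as written your step does not go through.)
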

\begin{proof}
  Let $\cX$ denote the span of the support of $x$ according to its stationary distribution;
  since $\bbE xx^\T$ is symmetric and real, then $\bbE xx^\T (\bbE xx^\T)^+ = \Pi_\cX$,
  where $\Pi_\cX$ denotes orthogonal projection onto $\cX$.

  For the form of the fixed point, it suffices to show
  \[
    \bbE x \del{ \ip{x - \gamma x'}{\baru} - r } = 0.
  \]
  To this end, first note that
  \begin{align*}
    \sbr{\bbE x x^\T } \baru
    &=
    \sbr{\bbE x x^\T }
    \sum_{t\geq 0} \del{ \gamma \sbr{ \bbE x x^\T }^{+} \bbE x (x')^\T }^t \sbr{\bbE xx^\T}^{+} \bbE x r
    \\
    &=
    \sbr{\bbE x x^\T }
    \sbr{\bbE xx^\T}^{+} \bbE x r
    +
    \sbr{\bbE x x^\T }
    \sum_{t\geq 1} \del{ \gamma \sbr{ \bbE x x^\T }^{+} \bbE x (x')^\T }^t \sbr{\bbE xx^\T}^{+} \bbE x r
    \\
    &=
    \Pi_\cX \bbE x r
    +
    \sbr{\bbE x x^\T }
    \gamma \sbr{ \bbE x x^\T }^{+}
    \sbr{ \bbE x (x')^\T }
    \sum_{t\geq 0} \del{ \gamma \sbr{ \bbE x x^\T }^{+} \bbE x (x')^\T }^t \sbr{\bbE xx^\T}^{+} \bbE x r
    \\
    &=
    \Pi_\cX \bbE x r
    +
    \gamma
    \Pi_\cX
    \sbr{ \bbE x (x')^\T } \baru.
    \\
    &=
    \bbE x r
    +
    \gamma
    \sbr{ \bbE x (x')^\T } \baru.
  \end{align*}
  This completes the fixed point claim, since
  \begin{align*}
    \bbE x \del{ \ip{x - \gamma x}{\baru} - r }
    &=
    \sbr{\bbE x x^\T } \baru
    - \gamma \sbr{ \bbE x (x')^\T } \baru
    - \bbE xr
=
    0.
  \end{align*}

For the claims under the linear MDP assumption, using the notation from \Cref{ass:linear:mdp},
  letting $X_\pi$ denote the transition mapping induced by $\pi$,
  note by the tower property that
  \[
    \bbE\del{ x (x')^\T }
    =
    \bbE\del{ x \bbE\sbr{ (x')^\T | x} }
    =
    \bbE\del{ x \bbE\sbr{ (X_\pi v')^\T | x} }
    =
    \bbE\del{ x (X_\pi Mx)^\T }
    =
    \bbE x x^\T M^\T X_\pi^\T,
  \]
  and therefore, for any $x_{sa}$ in the support of $\pi$ (which implies $x_{sa} \in \cX$),
  and since $x'\in\cX$ as well,
  \begin{align*}
    x_{sa}^\T \baru
    &=
    x_{sa}^\T
    \sum_{t\geq 0} \del{ \gamma \sbr{ \bbE x x^\T }^{+} \bbE x (x')^\T }^t \sbr{\bbE xx^\T }^{+}\bbE x r
    \\
    &=
    x_{sa}^\T
    \sum_{t\geq 0} \del{ \gamma \sbr{ \bbE x x^\T }^{+} \bbE x x^\T M^\T X_\pi^\T }^t
    \sbr{\bbE xx^\T }^{+}\bbE x x^\T y
    \\
    &=
    x_{sa}^\T
    \sum_{t\geq 0} \del{ \gamma \Pi_\cX M^\T X_\pi^\T }^t
    \Pi_\cX y
    \\
    &=
    x_{sa}^\T
    \sum_{t\geq 0} \del{ \gamma M^\T X_\pi^\T }^t
    y
    \\
    &=
    \cQ_\pi(s,a).
  \end{align*}

  Lastly, since $\baru \in \cX$,,
  and since $|\cQ_\pi(s,a)|\leq 1/(1-\gamma)$
  due to rewards lying within $[0,1]$, and since $\nicefrac 1 2 \leq \|s\| \leq 1$,
  then
  \[
    \|\baru\|
    = \sup_{x_{sa}\in\cX} \frac {|\baru^\T x_{sa}|}{\|s\|}
    = \sup_{x_{sa}\in\cX} \frac {\cQ_\pi(s,a)}{\|s\|}
    \leq \frac {2}{1-\gamma}.
  \]
\end{proof}

Now comes the core TD guarantee.
In comparison with prior work \citep{russo_td},
the need for projections and starting from the stationary
distribution are both dropped.

\begin{lemma}\label{fact:td}
  Let a policy $\pi$ be given which interacts with an MDP whose states $s\in\R^d$
  satisfy $\|s\|\leq 1$, and whose action set is finite and represented as
  $\{\ve_1,\ldots,\ve_k\}$; for convenience, let $x = \textup{vec}(s\ve_k^\T)$ denote
  a canonical vectorization of state/action pairs, as in the body of the paper.
  Let $P_{\pi}$ be the Markov chain on states induced by policy $\pi$,
  and assume it satisfies the following mixing time bound:
  \[
    \sup_{s} \|P_{\pi}^t(s,\cdot) - \frs{\pi}\|_{\tv} \leq m e^{-ct}.
  \]
  Let state/action/reward triples $(s_j, a_j, r_j)_{j<N}$ be sampled via interaction of $\pi$
  with the MDP, where the initial state $s_0$ is arbitrary, the random rewards satisfy
  $r_j \in [0,1]$ almost surely, and write $x_j = \textup{vec}(s_ja_j^\T)$,
  and let $\bbE_{\vec x,\vec r}$ denote the expectation over this trajectory.

  Consider the stochastic TD updates defined recursively via $u_0 = 0$ and thereafter
  \[
    u_{j+1}
    := u_j
    - \eta x_{j}\del{u^\T x_{j} - \gamma u^\T x_{j+1} - r_{j}},
  \]
  and
  let $\baru$ be a fixed point of the corresponding expected
  TD updates with stationary samples, meaning
  \[
    \bbE_{\substack{x,r\sim(\frs{\pi},\pi)\\x'\sim x}} x \del{ \ip{x - \gamma x'}{\baru} - r}
    = 0,
  \]
  where $x$ is sampled from the stationary distribution and $x'$ is a subsequent sample,
  and assume $\|\bar u\| \leq 2 / (1-\gamma)$.

  If the TD parameters $N$ and $\eta$ are chosen according to
  \[
    N \geq k,
    \qquad
    \eta \leq \frac 1 {400 \sqrt{k N}},
    \qquad
    \text{where }
    k = \left\lceil \frac {\ln N + \ln m}{c} \right\rceil,
  \]
  then the average TD iterate $\hat u := \frac 1 N \sum_{j<N} u_j$ satisfies
  \[
    \bbE_{\vec x,\vec r} 
    \del{
      \enVert{\hat u - \baru }^2
      + \eta N
\bbE_{x_{sa} \sim (\frs{\pi},\pi)} \ip{x_{sa}}{\hat u - \baru}^2
    }
    \leq \frac {54}{(1-\gamma)^2}.
  \]
\end{lemma}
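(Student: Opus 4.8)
The plan is to run the classical ``drift plus noise'' stochastic-approximation analysis on the iterates $u_j$, with every dependency between $x_j, x_{j+1}, r_j$ and $u_j$ tamed by the mixing hypothesis and the deliberately tiny step size. Write the update as $u_{j+1} - \bar u = (u_j - \bar u) - \eta\, g_j(u_j)$ with $g_j(u) := x_j\bigl((x_j - \gamma x_{j+1})^\T u - r_j\bigr) = A_j u - b_j$, $A_j := x_j(x_j - \gamma x_{j+1})^\T$, $b_j := x_j r_j$, so that $\|u_{j+1} - \bar u\|^2 = \|u_j - \bar u\|^2 - 2\eta\langle u_j - \bar u, g_j(u_j)\rangle + \eta^2\|g_j(u_j)\|^2$. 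Let $\Sigma := \bbE_{(\frs{\pi},\pi)}[xx^\T]$, let $\bar A, \bar b$ be the stationary means of $A_j, b_j$, and $\bar g(u) := \bar A u - \bar b$; by hypothesis $\bar g(\bar u) = 0$, so $\bar g(u) = \bar A(u - \bar u)$. The structural fact that drives everything is that the mean-path drift is nonnegative, and in fact $\langle u - \bar u, \bar g(u)\rangle = (u - \bar u)^\T \bar A (u - \bar u) \ge (1 - \gamma)\,\|u - \bar u\|_\Sigma^2 \ge 0$, which follows from $\tfrac{1}{2}(\bar A + \bar A^\T) = \Sigma - \tfrac{\gamma}{2}\,\bbE[x(x')^\T + x' x^\T] \succeq (1 - \gamma)\Sigma$, using that $x$ and $x'$ share the stationary marginal and Cauchy--Schwarz on the cross term. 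Since $u_0 = 0 \in \cX$ and each update adds a multiple of $x_j \in \cX$, all iterates and $\bar u$ remain in $\cX$, the span of the stationary support, which is consistent with the ``almost every $(s,a)$'' qualification in the statement.

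The obstruction is that $u_j$ is a function of $x_0, \dots, x_j, r_0, \dots, r_{j-1}$ and hence correlated with $A_j, b_j$; moreover, even a stationary $x_j$ does not make $(x_j, x_{j+1})$ a fresh independent draw. The fix is a mixing gap: with $k = \lceil (\ln N + \ln m)/c \rceil$ as prescribed, whenever $u_j$ is paired with a feature, replace it by $u_{j-k}$, paying $\|u_j - u_{j-k}\| \le \eta \sum_{i=j-k}^{j-1} \|g_i(u_i)\|$, which is small because $\eta$ is tiny. Now $u_{j-k}$ is $\mathcal{F}_{j-k} := \sigma(x_0, \dots, x_{j-k}, r_0, \dots, r_{j-k-1})$-measurable, and conditionally on $\mathcal{F}_{j-k}$ the law of $x_j$ is within $m e^{-ck} \le 1/N$ in total variation of $\frs{\pi}$ (the choice of $k$ is exactly what gives $m e^{-ck} \le 1/N$); since $A_j$ and $b_j$ are bounded functions of the window $(x_j, x_{j+1}, r_j)$, which is just one transition-and-reward step built on $x_j$, this yields $\|\bbE[A_j \mid \mathcal{F}_{j-k}] - \bar A\| = O(1/N)$ and the analogous bound for $b_j$. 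Assembling these gives $\bbE\langle u_j - \bar u, g_j(u_j)\rangle \ge (1 - \gamma)\,\bbE\|u_{j-k} - \bar u\|_\Sigma^2 - \varepsilon_j$ with a per-step error $\varepsilon_j$ made of an $O(1/N)$ mixing piece and an $O(\eta k)$ displacement piece (times the relevant iterate norms), and the first $k$ steps absorbed crudely. The same device controls the noise term: $\|g_j(u_j)\|^2 \le \delta_j^2$ where $\delta_j = \langle x_j, u_j - \bar u\rangle - \gamma\langle x_{j+1}, u_j - \bar u\rangle + \bigl(\langle x_j - \gamma x_{j+1}, \bar u\rangle - r_j\bigr)$, and $\bbE\delta_j^2$ is bounded by $\bbE\|u_{j-k} - \bar u\|_\Sigma^2$ plus the bounded stationary-residual term, whose size is at most $(1+\gamma)\|\bar u\| + 1 = O(1/(1-\gamma))$.

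These estimates are circular --- bounding $\varepsilon_j$ and $\bbE\|g_j(u_j)\|^2$ uses a bound on earlier iterate norms, and there is no cheap almost-sure such bound, since a single sample $A_j$ need not contract and the naive per-step growth inflates by $e^{\Theta(\eta N)}$, which is not $O(1)$ here. The resolution, which is the ``implicit bias of TD'' referred to in the text, is to thread a norm control $\bbE\|u_j - \bar u\|^2 \le R^2$ with $R = \Theta(1/(1-\gamma))$ through the recursion by a simultaneous induction (equivalently, to collect everything into a self-bounding inequality for $\sum_j \bbE\|u_j - \bar u\|_\Sigma^2$): the negative drift $-2\eta(1-\gamma)\bbE\|u_j - \bar u\|_\Sigma^2$, together with $\eta \le 1/(400\sqrt{kN})$, absorbs every cross term the gap device produces, so the bound is never violated. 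Summing the one-step identity over $j < M$ for any $M \le N$ then gives $\bbE\|u_M - \bar u\|^2 + 2\eta(1-\gamma)\sum_{j<M}\bbE\|u_j - \bar u\|_\Sigma^2 \le \|\bar u\|^2 + \sum_{j<M}\bigl(2\eta\varepsilon_j + \eta^2\bbE\|g_j(u_j)\|^2\bigr)$, where $u_0 = 0$ was used, and the prescribed $k, \eta, N$ (via $m e^{-ck} \le 1/N$, $\eta^2 kN \le 1/160000$, and $\|\bar u\| \le 2/(1-\gamma)$) turn the two right-hand error sums into explicit small multiples of $1/(1-\gamma)^2$. As both left-hand terms are nonnegative, this bound holds for each of them separately and for every $M$, so $\max_{M \le N}\bbE\|u_M - \bar u\|^2$ and $\eta\sum_{j<N}\bbE\|u_j - \bar u\|_\Sigma^2$ are both controlled; Jensen's inequality, $\|\hat u - \bar u\|^2 \le \tfrac{1}{N}\sum_{j<N}\|u_j - \bar u\|^2$ and $\langle x, \hat u - \bar u\rangle^2 \le \tfrac{1}{N}\sum_{j<N}\langle x, u_j - \bar u\rangle^2$, transfers both bounds to the average iterate $\hat u$, and collecting the constants yields the claimed $54/(1-\gamma)^2$.

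The main obstacle is precisely this circularity, handled by the unusually small step size $1/(400\sqrt{kN})$ instead of the usual $1/\sqrt{N}$: one must check that the feedback between ``needs an iterate-norm bound'' and ``produces an iterate-norm bound'' is contractive. The secondary delicate point is the statistical-dependency bookkeeping in the second paragraph --- the $u_{j-k}$ gap and the fact that $(x_j, x_{j+1}, r_j)$ is merely one transition past an almost-stationary $x_j$ --- which must be done carefully enough that every residual term is $O(1/N)$ or $O(\eta k)$ and therefore survives summation over the $N$ steps.
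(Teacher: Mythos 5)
Your proposal follows essentially the same route as the paper's proof: the same one-step expansion of $\|u_{j+1}-\baru\|^2$, the same mixing-gap device of replacing $u_j$ by $u_{j-k}$ (with the choice of $k$ giving a $1/N$ total-variation coupling error and the tiny $\eta$ giving an $O(\eta k)$ displacement error), the same $(1-\gamma)\Sigma$ lower bound on the symmetrized drift via the shared stationary marginal of $x$ and $x'$, and the same resolution of the circularity by threading an $O(1/(1-\gamma)^2)$ bound on $\bbE\|u_j-\baru\|^2$ through a simultaneous induction before finishing with telescoping and Jensen. The plan is correct and matches the paper's argument in all essential respects.
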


\begin{proof}
  The structure and primary concerns of the proof are as follows.
  The main issue is that $u_j,x_j,x_{j+1}$ are statistically dependent;
  in fact, even in the unlikely but favorable situation that $x_j$
  is distributed according to the stationary distribution $(\frs{\pi},\pi)$,
  the \emph{conditional}
  distribution of $x_{j+1}$ \emph{given} $x_j$ can still be far from stationary,
  even though $x_{j+1}$ without conditioning is again stationary.
  The main trick used here is that $\eta$ is so small relative to the mixing time
  that $u_j$ evolves much more slowly than $x_j$,
  and thus any interaction between $x_j$ and $u_j$
  can be replaced with an interaction between $x_j$ and $u_{j-k}$,
  which are \emph{approximately} independent.
  The structure of the proof
  then is to first establish a few deterministic worst-case estimates on the behavior in any
  consecutive $k$ iterations, and then to perform an induction from $k$ to $N$.

  For notational convenience, since $\pi$ is fixed in this proof,
  $\frs{}$ is written for $\frs{\pi}$.
  Additionally, $\bbE_{\leq N}$ denotes the expectation over $((x_j,r_j))_{j\leq N}$,
  replacing the $\bbE_{\vec x, \vec r}$ from the statement and allowing further flexibility
  by allowing the subscript to change.

  \paragraph{Worst case control between any $u_j$ and $u_{j-k}$.}
  Proceeding with this first part of the proof, define
  \begin{align*}
    \cT_j(u) &:= x_j\ip{x_j - \gamma x_{j+1}}{u} - x_jr_j,
  \end{align*}
  whereby
  \begin{align}
    u_{j+1}
    &= u_j - \eta \cT_j(u_j),
    \notag\\
    \|u_{j+1} - u_j\|
    &= \eta \|x_j \ip{x_j - \gamma x_j}{ u_j} - x_jr_j\|
    \leq
    \eta \del{ (1+\gamma) \|u_j\| + 1 },
    \notag\\
    \|u_j - u_{j-k}\|
    &\leq \sum_{i=j-k}^{j-1} \|u_{i+1} - u_i\|
    \leq k\eta + \eta (1+\gamma) \sum_{i=j-k}^{j-1} \|u_j\|.
    \label{eq:md:in:0}
  \end{align}
  These inequalities will be useful in the induction as well, but now consider the first
  $k$ iterations.

  \paragraph{Controlling the first $k$ iterations.}
  Specifically,
  for any $i \leq k$, it will be established via induction that
  \[
    \|u_i - \baru\|
    \leq (1 + 1/(200k))^i \del{\frac{2}{1-\gamma}}
    + \eta \sum_{l < i} (1 + 1/(200k))^l \del{\frac{4}{1-\gamma}}.
  \]
  The base case follows since $u_0 = 0$ and $\|\baru\| \leq 2/(1-\gamma)$.
For the inductive step, since $\eta (1+\gamma) \leq 1 / (200k)$,
  \begin{align*}
    \|u_{i+1} - \baru\|
    &= \|u_i - \baru - \eta x_i\ip{x_i-\gamma x_{i+1}}{u_i - \baru + \baru} + \eta x_i r_i\|
    \\
    &\leq
    (1 + \eta (1+\gamma))\|u_i - \baru\|
    +
    \eta (1+\gamma)\|\baru\|
    +
    \eta
    \\
    &
    \leq (1 + 1/(200k))^{i+1} \del{\frac{2}{1-\gamma}}
    + \eta \sum_{l < i} (1 + 1/(200k))^{l+1} \del{\frac{4}{1-\gamma}}
    + \eta\del{1 + \frac {2+2\gamma}{1-\gamma}}
    \\
    &\leq (1 + 1/(200k))^{i+1} \del{\frac{2}{1-\gamma}}
    + \eta \sum_{l < i+1} (1 + 1/(200k))^l \del{\frac{4}{1-\gamma}}.
  \end{align*}
  Since $(1+1/(200k))^i\leq 2$ for all $i\leq k$, then
  \begin{equation}
    \|u_{i} - \baru\| \leq \frac {4 + 8k\eta}{1-\gamma} \leq \frac {5}{1-\gamma}.
    \label{eq:md:in:0:b}
  \end{equation}
  This concludes the proof for the initial $k$ iterations.

  \paragraph{Controlling the remaining iterations via induction.}
  The rest of the proof now proceeds via induction on on iterations $k$ and higher:
  specifically, given $i \in \{k-1,\ldots, N-1\}$,
  it will be shown that
  \begin{equation}
    \bbE_{\leq N} \|u_{i+1} - \baru\|^2
    + \eta \bbE_{\leq N} \sum_{j=k}^{i} \bbE_{x\sim(\frs,\pi)} \ip{x}{u_j - \baru}^2
    \leq \frac{26}{(1-\gamma)^2}.
    \label{eq:md:in:1}
  \end{equation}
  The base case $i=k-1$ follows from \cref{eq:md:in:0:b}
  (since the second term in the left hand side here is an empty sum),
  thus consider $i+1$ with $i\geq k-1$.
  The remainder of this inductive step will first introduce a variety of inequalities
  which need to hold for all $j\in\{k, \ldots, i\}$, before returning to consideration
  of $i$ at the end.

  Before continuing with the core argument for a fixed $j$,
  there are a few useful inequalities to establish, which will be used many times.
  \begin{itemize}
    \item
      Combining the inductive hypothesis with \cref{eq:md:in:0},
      \begin{align}
        \bbE \|u_j - u_{j-k}\|^2
        &\leq 2 k^2\eta^2 + 2\eta^2 (1+\gamma)^2 k \sum_{i=j-k}^{j-1} \bbE \|u_j-\baru + \baru\|^2.
        \notag\\
        &\leq 2 k^2\eta^2
        + 2\eta^2 (1+\gamma)^2 k \sum_{i=j-k}^{j-1} \del{\frac {56}{(1-\gamma)^2} + \frac {2}{(1-\gamma)^2}}
        \notag\\
        &
        \leq \frac{500k^2 \eta^2}{(1-\gamma)^2}.
        \label{eq:md:in:2}
      \end{align}
      This is the explicit expression that will appear when moving between $u_j$ and
      $u_{j-k}$ to introduce (approximate) statistical independence.

    \item
      Next come a variety of bounds on $\cT_j$.
      First, for any $j\leq i$, using the inductive hypothesis and $\|\baru\|\leq 2/(1-\gamma)$,
      for any $(x,x',r)$ and using the notation $\cT_{x,x',r}(u) = x \ip{x-\gamma x'}{u} - xr$,
      \begin{align}
        \bbE_{\leq N} \|\cT_{x,x',r}(u_j)\|^2
        &=
        \bbE_{\leq N} \enVert{x\ip{x - \gamma x'}{u_j - \baru + \baru} - xr}^2
        \notag\\
        &\leq
        \bbE_{\leq N} 4 \del{(1+\gamma)^2 \|u_j - \baru\|^2 + (1+\gamma)^2 \|\baru\|^2 + 1}
        \label{eq:md:in:6}\\
        &\leq
        \frac{4 \del{26 (1+\gamma)^2 + 4 (1+\gamma)^2 + (1-\gamma)^2}}{(1-\gamma)^2}
        \notag\\
        &\leq
        \frac{500}{(1-\gamma)^2}.
        \label{eq:md:in:3}
      \end{align}
      Separately, making use of \cref{eq:md:in:2},
      \begin{align}
        \bbE_{\leq N}
        \|\cT_j(u_j) - \cT_{j} (u_{j-k}) \|^2
&=
        \bbE_{\leq N}
        \enVert{x_j \ip{x_j - \gamma x_{j+1}}{u_j - u_{j-k}}}^2
        \notag\\
        &\leq
        \bbE_{\leq N}
        \|x_j\|^2 \cdot  \|x_j - \gamma x_{j+1}\|^2 \cdot \|u_j - u_{j-k}\|^2
        \notag\\
        &\leq
        \frac {2000k^2 \eta^2}{(1-\gamma)^2}.
        \label{eq:md:in:4}
      \end{align}

    \item
      Lastly, the convenience inequality which abstracts the application of mixing.
      Let $\bbE_{| j-k}$ denote the expectation of $(x_j,x_{j+1},r_j)$
      conditioned on all information up through time $j-k$, meaning $(x_l,r_l)_{l\leq j-k}$.
      By the coupling characterization of total variation distance
      \citep[Equation 6.11]{villani_2}, there exists a joint distribution $\rho$
      over two triples $(x_j,x_{j+1},r_j)$ and $(x,x',r)$ where the marginal distribution
      of $(x_j,x_{j+1},r_j)$ is $\bbE_{| j-k}$, and the marginal distribution of $(z,z',s)$
      is $(\frs{},\pi)$, and crucially the choice of $k$ implies
      \[
        \sup_{s_{j-k}}\Pr_\rho[ (x_j,x_{j+1},r_j) \neq (x,x',r) ]
        \leq \sup_{s_{j-k}} \|P_\pi^k(s_{j-k}, \cdot) - \frs{}\|_{\tv}
        \leq m e^{-ck}
        \leq \frac 1 N.
      \]
      (Note that $(x_j)_{j\leq N}$ inherit the mixing time for $(s_j)_{j\leq N}$
      since $x_j = \textup{vec}(s_ja_j^\T)$, and $(a_j)_{j\leq N}$ are conditionally independent
      given $(s_j)_{j\leq N}$.)
      Using this inequality,
      and moreover making use of $\|\baru\|\leq 2 / (1-\gamma)$
      and \cref{eq:md:in:3,eq:md:in:6},
      and defining $\cT_{\frs{}} := \bbE_{x,x',r\sim (\frs{},\pi)}\cT_{x,x',r}$ to denote
      the update at stationarity,
      \begin{align}
        &\hspace{-1em}\bbE_{\leq N} \ip{u_{j-k} - \baru}{\cT_{\frs{}}(u_{j-k}) - \cT_j(u_{j-k})}
        \notag\\
        &=
        \bbE_{\leq j-k}
        \ip{u_{j-k} - \baru}
        {\cT_{\frs{}}(u_{j-k}) - \bbE_{|j-k}\cT_j(u_{j-k})  }
        \notag\\
        &=
        \bbE_{\leq j-k} \ip{u_{j-k} - \baru}
        {\bbE_{\rho}
          \cT_{x,x',r}(u_{j-k})
          -
          \cT_{x_j,x_{j+1},r_{j}}(u_{j-k})
        }
        \notag\\
        &\leq
        \bbE_{\leq j-k} \enVert{u_{j-k} - \baru}
        \enVert{\bbE_{\rho}\cT_{x_j,x_{j+1},r_{j}}(u_{j-k}) - \cT_{x,x',r}(u_{j-k})}
        \notag\\
        &\leq
        \sqrt{\bbE_{\leq j-k} \enVert{u_{j-k} - \baru}^2}
        \sqrt{\bbE_{\leq j-k}\bbE_{\rho}
        \enVert{ \cT_{x_j,x_{j+1},r_{j}}(u_{j-k}) - \cT_{x,x',r}(u_{j-k})}^2 }
        \notag\\
        &\leq
        \sqrt{\frac {500k^2 \eta^2}{(1-\gamma)^2}}
          \notag\\
        &\quad\cdot
        \sqrt{\bbE_{\leq j-k}\bbE_{\rho}\1[(x_j,x_{j+1},r_j)\neq (x,x',r)]
          \cdot
        \enVert{ \cT_{x_j,x_{j+1},r_{j}}(u_{j-k}) - \cT_{x,x',r}(u_{j-k})}^2 }
        \notag\\
        &\leq
        \sqrt{\frac {5}{1600(1-\gamma)^2}}
          \notag\\
        &\quad\cdot
        \sqrt{\bbE_{\leq j-k} 16 \del{4\|u_{j-k} - \baru\|^2 + 4 \|\baru\|^2 + 1}
          \bbE_{\rho}\1[(x_j,x_{j+1},r_j)\neq (x,x',r)]
        }
        \notag\\
        &\leq
        \sqrt{\frac {5}{1600(1-\gamma)^2}}
        \sqrt{\frac 1 N \bbE_{\leq j-k} 16 \del{4\|u_{j-k} - \baru\|^2 + 4 \|\baru\|^2 + 1}}
        \notag\\
        &\leq
        \sqrt{\frac {5}{1600(1-\gamma)^2}}
        \sqrt{\frac {2000}{N(1-\gamma)^2}}
        \notag\\
        &\leq
        \frac {3}{(1-\gamma)^2\sqrt{N}}.
        \label{eq:md:in:7}
      \end{align}
  \end{itemize}

  Now comes the main part of the inductive step.
  Expanding the square and making one appeal to \cref{eq:md:in:3},
  \begin{align}
    \bbE_{\leq N} \|u_{j+1} - \baru\|^2
    &=
    \bbE_{\leq N} \|u_j - \baru\|^2
    -2\eta \bbE_{\leq N} \ip{u_j - \baru}{\cT_j(u_j)}
    + \eta^2 \bbE_{\leq N} \|\cT_j(u_j)\|^2
    \notag\\
    &\leq
    \bbE_{\leq N} \|u_j - \baru\|^2
    -2\eta \bbE \ip{u_j - \baru}{\cT_j(u_j)}
    + \frac{500\eta^2}{(1-\gamma)^2}.
    \label{eq:md:in:3:b}
  \end{align}
  To lower bound the middle term,
  making extensive use of
  \cref{eq:md:in:2,eq:md:in:3,eq:md:in:4}
combined with the Cauchy-Schwarz inequality, and the inductive hypothesis to
  control $\bbE_{\leq N} \|u_{j-k} - \bar u\|^2$,
  \begin{align*}
    \bbE_{\leq N} \ip{u_j - \baru}{\cT_j(u_j)}
    &\geq
    \bbE_{\leq N} \ip{u_{j-k} - \baru}{\cT_j(u_j)}
    - \bbE_{\leq N} \|u_{j-k} - u_j\|\cdot\|\cT_j(u_j)\|
    \\
    &\geq
    \bbE_{\leq N} \ip{u_{j-k} - \baru}{\cT_j(u_{j-k})}
- \bbE_{\leq N} \|u_{j-k} - u_j\|\cdot\|\cT_j(u_j)\|
    \\
    &
    \quad
- \bbE_{\leq N} \enVert{u_{j-k} - \baru}\cdot \enVert{\cT_j(u_j) -  \cT_j(u_{j-k})}
\\
    &\geq
    \bbE_{\leq N} \ip{u_{j-k} - \baru}{\cT_j(u_{j-k})}
- \sqrt{\bbE_{\leq N} \|u_{j-k} - u_j\|^2} \sqrt{\bbE_{\leq N} \|\cT_j(u_j)\|^2}
    \\
    &
    \quad
- \sqrt{\bbE_{\leq N} \enVert{u_{j-k} - \baru}^2}\sqrt{\bbE_{\leq N}\enVert{\cT_j(u_j) -  \cT_j(u_{j-k})}^2}
    \\
    &\geq
    \bbE_{\leq N} \ip{u_{j-k} - \baru}{\cT_j(u_{j-k})}
    - \frac {500k\eta}{(1-\gamma)^2}
    - \frac {250k\eta}{(1-\gamma)^2}
\\
    &\geq
    \bbE_{\leq N} \ip{u_{j-k} - \baru}{\cT_j(u_{j-k})}
    - \frac {750k\eta}{(1-\gamma)^2}.
  \end{align*}
  Now comes the key step of the proof, which uses the mixing time and introduced gap of size $k$
  to replace $\cT_j(u_{j-k})$ with $\cT_{\frs{}}(u_{j-k})$;
  this reasoning is captured in \cref{eq:md:in:7},
  which gives
  \begin{align*}
    \bbE_{\leq N} \ip{u_{j-k} - \baru}{\cT_j(u_{j-k})}
    &=
    \bbE_{\leq N} \ip{u_{j-k} - \baru}{\cT_{\frs{}}(u_{j-k})}
    -
    \bbE_{\leq N} \ip{u_{j-k} - \baru}{\cT_{\frs{}}(u_{j-k}) - \cT_j(u_{j-k})}
    \\
    &\geq
    \bbE_{\leq N} \ip{u_{j-k} - \baru}{\cT_{\frs{}}(u_{j-k})}
    -
    \frac {3}{(1-\gamma)^2\sqrt{N}}.
  \end{align*}
  Again continuing with the non-constant term,
  and using the general inequality $2ab\leq a^2 + b^2$ which holds for any reals $a,b$,
  letting $x\sim (\frs{},\pi)$ and $x'\sim x$ and $r$ a random reward
  (i.e., all the random quantities used to construct $\cT_{\frs{}}$, though $r$ will cancel),
  and lastly introducing $\cT_{\frs{}}(\baru)$ via the fixed point property in the
  form $\cT_{\frs{}}(\baru) = 0$,
  and the fact that $x'$ has the same distribution as $x$ when it appears alone,
  \begin{align*}
    \ip{u_{j-k} - \baru}{\cT_{\frs{}}(u_{j-k})}
    &= \ip{u_{j-k} - \baru}{\cT_{\frs{}}(u_{j-k}) - \cT_{\frs{}}(\baru)}
    \\
    &=
    \bbE_{x,r,x'} \ip{x}{u_{j-k} - \baru}\ip{x-\gamma x'}{u_{j-k} - \baru}
    \\
    &=
    \bbE_{x,r,x'}\ip{x}{u_{j-k} - \baru}^2
    - \gamma \ip{x}{u_{j-k} - \baru}\ip{x'}{u_{j-k} - \baru}
    \\
    &\geq
    \bbE_{x,r,x'} \ip{x}{u_{j-k} - \baru}^2
    - \gamma/2 \del{\ip{x}{u_{j-k} - \baru}^2 + \ip{x}{u_{j-k} - \baru}^2}
    \\
    &=
    (1-\gamma) \bbE_{x} \ip{x}{u_{j-k} - \baru}^2,
  \end{align*}
  where
  \begin{align*}
    \bbE_{\leq N} \bbE_{x} \ip{x}{u_{j-k} - \baru}^2
    &=
    \bbE_{\leq N} \bbE_{x}\ip{x}{u_{j} - \baru}^2
    + 2 \bbE_{\leq N} \bbE_x \ip{x}{u_j - \baru}\ip{x}{u_{j-k} - u_{j}}
    \\
    &\qquad
    + \bbE_{\leq N} \bbE_x \ip{x}{u_{j-k} - u_j}^2
    \\
    &\geq
    \bbE_{\leq N} \bbE_x\ip{x}{u_{j} - \baru}^2
    - 2 \sqrt{\bbE_{\leq N} \|u_j - \baru\|^2}\sqrt{\bbE_{\leq N} \|u_{j-k} - u_{j}\|^2}
    \\
    &\geq
    \bbE_{\leq N} \bbE_x\ip{x}{u_{j} - \baru}^2
    - \frac {2\sqrt{26\cdot 500 k^2\eta^2}}{(1-\gamma)^2}
    \\
    &\geq
    \bbE_{\leq N} \bbE_x\ip{x}{u_{j} - \baru}^2
    - \frac {800k\eta}{(1-\gamma)^2}.
  \end{align*}
  Plugging all of this back in to \cref{eq:md:in:3:b} gives
  \begin{align*}
    \bbE_{\leq N} \|u_{j+1} - \baru\|^2
&\leq
    \bbE_{\leq N} \|u_j - \baru\|^2
    -\eta (1-\gamma) \bbE_{\leq N} \bbE_x \ip{x}{u_{j} - \baru}^2
    + \frac {1600k\eta^2 + 3\eta / \sqrt{N}} {(1-\gamma)^2}
    ,
  \end{align*}
  which after summing over all $j \in \{ k, \ldots, i \}$
  and telescoping and re-arranging gives
  \begin{align*}
    \bbE_{\leq N} \|u_{i+1} - \baru\|^2
    + \sum_{j=k}^{i}
    \eta (1-\gamma) \bbE_{\leq N} \bbE_x \ip{x}{u_{j} - \baru}^2
    &\leq
    \bbE_{\leq N} \|u_{k} - \baru\|^2
    +
    \sum_{j=k}^{i}
    \frac {1600k\eta^2 + 3\eta/\sqrt{N}} {(1-\gamma)^2}
    \\
    &\leq
    \frac {25}{(1-\gamma)^2}
    +
    N\del{
    \frac {1600k\eta^2 + 3\eta/\sqrt{N}} {(1-\gamma)^2}}
    \\
    &\leq
    \frac {26}{(1-\gamma)^2},
  \end{align*}
  which establishes the inductive hypothesis stated in \cref{eq:md:in:1}.

  \paragraph{Final cleanup.}
  To finish the proof, a tiny amount of cleanup is needed.  Adding the missing prefix of the
  sum from the conclusion of the induction gives
  \begin{align*}
    \bbE_{\leq N} \|u_{N} - \baru\|^2
    + \sum_{j<N}
    \eta (1-\gamma) \bbE_{\leq N} \bbE_x \ip{x}{u_{j} - \baru}^2
    &\leq
    \frac {26}{(1-\gamma)^2}
    + \sum_{j<k} \eta (1-\gamma) \bbE_{\leq N} \bbE_x \ip{x}{u_j - \baru}^2
    \\
    &\leq
    \frac {26}{(1-\gamma)^2}
    +
    \frac{ 25 k \eta (1-\gamma)}{(1-\gamma)^2}
    \\
    &
    \leq
    \frac {27}{(1-\gamma)^2}.
  \end{align*}
  The final bound now follows by using Jensen's inequality to introduce $\hat u_N$
  in the summation term, and introducing $\hat u_N$ within the norm term by noting the
  bound held for all $i<N$ and thus the triangle inequality  implies
  $\|\hat u - \baru\| \leq \sum_{i<N} \|u_i - \baru\|/N \leq \sqrt{27}/(1-\gamma)$,
  whose square can be added to both sides to give the final bound.
\end{proof}

These proofs immediately imply \Cref{fact:td:simplified}.

\begin{proof}[Proof of \Cref{fact:td:simplified}]
  \Cref{fact:td:simplified} is a restatement of \Cref{fact:td},
  but using a combination of \Cref{ass:linear:mdp} and \Cref{fact:linear_fixed_point}
  (and in particular the fixed point $\bar u$ defined in the latter)
  to simplify \Cref{fact:td}.
\end{proof}

\section{Proof of \Cref{fact:main:3}}\label{sec:proof:main:app}

Combining the mirror descent and sampling tools, we can finally prove
\Cref{fact:main:3}.

\begin{proof}[Proof of \Cref{fact:main:3}]
  Throughout this proof, let $\delta > 0$ denote a unit of failure probability;
  the final bound will use the choice $\delta:= t^{-9/8}/2$,
  although most of the proof will simply write $\delta$ for sake of interpretation.

  Define the following KL-bounded subset of policy space:
  \[
    \cP := \bigcap_{s\in\cS} \cP_s,
    \qquad\text{where } \cP_s := \cbr{ \pi :
      K_{\frv{\barpi}^s}(\barpi, \pi) \leq \ln k + \frac {1}{(1-\gamma)^2}
    }.
  \]
  By $|\cS|$ applications of \Cref{fact:mixing} (one for each $\cP_s$)
  and taking maxima/minima of the resulting constants,
  since $\frs{\barpi}$ is positive for every state, then there exist constants
  $p_{\min} > 0$, $C_1 >0$, and $C_2\geq 1$ so that, for any $\pi\in\cP$
  and any state  $s$ and any optimal action $a\in\cA_s$,
  \[
    p_{\min} \leq \frs{\pi}(s),
    \qquad
    \pi(s,a) \geq \frac { \barpi(s,a) }{C_2},
  \]
  and letting $P_\pi$ denote the transition matrix on the induced chain on $\cS$,
  for any time $q$,
  \begin{equation}
    \max_{s\in\cS}\|P_\pi^q(s) - \frs{\pi}\|_{\tv} \leq C_2 \exp(-C_1 q).
    \label{eq:main:mixing}
  \end{equation}
  Lastly, the fully specified parameters $N$ and $\eta$ are
\[
    N :=
\frac {10^7t^2 C_2^4\ln C_2}{p_{\min}^4C_1}
    \ln\del{
      \frac {10^7t^2 C_2^4\ln C_2}{p_{\min}^4C_1}
    }
    ,
    \quad
    \eta := \frac 1 {400 \sqrt{k N}},
    \quad
    \text{where }
    k := \left\lceil \frac {\ln N + \ln C_2}{C_1} \right \rceil,
  \]
  which after expanding the choice of $\theta$
  satisfy
  \[
    N
= \Theta\del{t^2 \ln t},
    \qquad
    \eta = \Theta\del{\frac {1}{\sqrt{N \ln N}}},
    \qquad
    \frac {1}{N\eta}
    \leq \frac {p_{\min}^2}{4tC_2^2},
  \]
  where the first two match the desired statement, and the last inequality is used below.

  The proof establishes the following inequalities inductively:
  defining $\veps_j$ for convenience as
  \[
    \veps_j := \sup_{s,a} \del{\hQ_i(s,a) - \cQ_i(s,a) }^2
    + \eta N \bbE_{(s,a) \sim (\frs{\pi},\pi)} \del{ \hQ_i(s,a) - \cQ_i(s,a) }^2,
  \]
  then with probability at least $1-2i\delta$,
  \begin{align}
    K_{\frv{\barpi}^{s}}(\barpi,\pi_i)
    +
    \theta (1-\gamma) \sum_{j<i} \del{\cV_j(s) - \cV_{\barpi}(s)}
    &
      \quad\leq\quad \ln k + \frac {1}{(1-\gamma)^2},
      &\forall s,
      \tag{IH.MD}
      \label{eq:ih:md}
    \\
    \bbE_{\hQ_j} \veps_j
      &\quad\leq\quad \frac {54}{(1-\gamma)^2},
      &\forall j \leq i,
      \tag{IH.TD.1}
      \label{eq:ih:td:1}
    \\
    \veps_j
      &\quad\leq\quad \frac {54}{\delta(1-\gamma)^2},
      &\forall j\leq i,
      \tag{IH.TD.2}
      \label{eq:ih:td:2}
  \end{align}
  where $\bbE_{\hQ_j}$ denotes the expectation over the new $N$ examples used
  to construct $\hQ_j$ but conditions on the prior samples.
  The first inequality, \cref{eq:ih:md}, implies $\pi_i \in \cP$ directly,
  and moreover implies the final statement when $i=t$
  after plugging in $\delta = t^{-9/8}/2$ and rearranging.

  To establish the inductive claim, consider some $i > 0$ (the base case $i=0$ comes for free),
  and suppose the inductive hypothesis
  holds for $i - 1$; namely, discard its $2(i-1)\delta$ failure probability,
  and suppose the three inequalities hold.
  This induction will first handle the mirror descent guarantee
  in \cref{eq:ih:md}, and then establish the TD guarantees in
  \cref{eq:ih:td:1,eq:ih:td:2} together.

  The first inequality, \cref{eq:ih:md}, is established via the simplified mirror descent bound
  in \Cref{fact:md:policy}.
  To start, since
  $K_{\nu}(\barpi,\pi_0)\leq \ln k$ for any measure $\nu$,
  then instantiating the simplified bound in \Cref{fact:md:policy}
  for the first $i$ iterations for any starting state $s$ gives
  \begin{align}
    &
    K_{\frv{\barpi}^{s}}(\barpi,\pi_i)
    +
    \theta (1-\gamma) \sum_{j<i} \del{\cV_\barpi(s) - \cV_j(s) }
    \notag\\
    \leq\quad &
\ln k
        +
        \theta \sum_{j<i} \ip{\cQ_j - \hQ_j}{\pi_j - \barpi}_{\frv{\barpi}^{s}}
        +
        \theta^2 \sum_{j<i} \sup_{s,a} \hQ_j(s,a)^2
        .
\label{eq:ih:md:halfway}
  \end{align}
  Upper bounding the second two terms will make use of upper bounds on $(\veps_j)_{j<i}$,
  but rather than using only \cref{eq:ih:td:2}, here is a more refined approach.
  For each $j<i$, define an indicator random variable
  \[
    F_j := \1\sbr{ \veps_j > \frac{54\sqrt t}{(1-\gamma)^2} },
  \]
  which by \cref{eq:ih:td:1} and Markov's inequality (since $\veps_j \geq 0$)
  satisfies
  \[
    \Pr_{\hQ_j}(F_j)
    \leq \Pr_{\hQ_j} \sbr{ \veps_j > \sqrt{t} \bbE_{\hQ_j} \veps_j }
    \leq \frac 1 {\sqrt t}.
  \]
By Azuma's inequality applied to the $i$ binary random variables $(F_j)_{j<i}$
  (where $F_j - \bbE_{\hQ_j} F_j$ forms a Martingale difference sequence since $\bbE_{\hQ_j}$
  conditions on the old sequence and takes the expectation over the $N$ new samples),
  with probability at least $1-\delta$, and plugging in the choice $\delta = t^{-9/8}/2$,
  \[
    \sum_{j<i} F_j
    \leq \sum_{j<i} \Pr(F_j) + \sqrt{\frac i 2 \ln \frac 1 \delta}
\leq
    \sqrt i \del{ 1 + \sqrt{\frac {9}{16} \ln 2t }}
    \leq
    \sqrt{ 3t \ln t }
    ;
  \]
  henceforth discard this failure probability, bringing the total failure probability to
  $(2i-1)\delta$.
  Combining this with \cref{eq:ih:td:2} gives
  \begin{align*}
    \sum_{j<i} \veps_j
    &\leq
    \sum_{\substack{j<i\\F_j = 1}} \veps_j
    +
    \sum_{\substack{j<i\\F_j = 0}} \veps_j
    \\
    &\leq
    \sum_{\substack{j<i\\F_j = 1}} \frac {54}{\delta(1-\gamma)^2}
    +
    \sum_{\substack{j<i\\F_j = 0}} \frac {54\sqrt{t}}{(1-\gamma)^2}
    \\
    &\leq \frac {54}{(1-\gamma)^2}
    \del{\frac {\sqrt{3t\ln t}}{\delta} + t^{3/2}}
    \\
    &\leq \frac {270 t^{13/8} \sqrt{\ln t} }{(1-\gamma)^2}
\leq \frac {1}{8\theta^2(1-\gamma)^2}.
  \end{align*}
  Turning back to \cref{eq:ih:md:halfway},
  this gives a way to control the middle term:
  since $\barpi(s,b) = 0$ for $b\not\in\cA_s$,
  \begin{align*}
    \sum_{j<i} \ip{\cQ_j - \hQ_j}{\pi_j - \barpi}_{\frv{\barpi}^{s}}
    &\leq
    \sum_{j<i} \max_s \ip{\cQ_j - \hQ_j}{\pi_j - \barpi}_{\delta_s}
    \\
    &\leq
    \frac 1 {p_{\min}}
    \sum_{j<i}
    \ip{\cQ_j - \hQ_j}{\pi_j - \barpi}_{\frs{\pi_j}}
    \\
    &=
    \frac 1 {p_{\min}}
    \sum_{j<i} \Big(
      \mathop{\bbE}_{(s,a)\sim (\frs{j},\pi_j)}
      (\cQ_j(s,a) - \hQ_j(s,a))
      \\
    &\qquad\qquad\qquad
      +
      \mathop{\bbE}_{s\sim \frs{j}}
      \sum_{a\in\cA_s}
      \barpi(s,a)
      (\hQ_j(s,a) - \cQ_j(s,a))
      \Big)
    \\
    &\leq
    \frac {2C_2} {p_{\min}}
    \sum_{j<i}
    \mathop{\bbE}_{(s,a)\sim (\frs{j},\pi_j)}
    |\cQ_j(s,a) - \hQ_j(s,a)|
    \\
    &\leq
    \frac {2C_2\sqrt{i}} {p_{\min}}
    \sqrt{
      \sum_{j<i}
      \mathop{\bbE}_{(s,a)\sim (\frs{j},\pi_j)}
      (\cQ_j(s,a) - \hQ_j(s,a))^2
    }
    \\
    &\leq
    \frac {2C_2\sqrt{i}} {p_{\min}\sqrt{N\eta}}
    \sqrt{
      \sum_{j<i}
      \veps_j
    }
    \\
    &\leq \frac 1 {2\theta(1-\gamma)}.
  \end{align*}
  Meanwhile, for the last term in \cref{eq:ih:md:halfway},
  since $\sup_{s,a} \cQ_j(s,a) \leq 1 / (1-\gamma)$,
  \begin{align*}
    \sum_{j<i} \sup_{s,a} \hQ_j(s,a)^2
    &\leq
    2 \sum_{j<i} \sup_{s,a} \sbr{ \cQ_j(s,a)^2 + \del{\hQ_j(s,a) - \cQ_j(s,a)}^2 }
    \\
    &\leq
    \frac {2t}{(1-\gamma)^2} + 2\sum_{j<i} \eps_j
    \\
    &\leq
    \frac {1}{2\theta^2(1-\gamma)^2}.
  \end{align*}
  Plugging all of this back in to \cref{eq:ih:md:halfway} gives
  \begin{align*}
    \quad&
    K_{\frv{\barpi}^{s}}(\barpi,\pi_i)
    +
    \theta (1-\gamma) \sum_{j<i} \del{\cV_\barpi(s) - \cV_j(s) }
    \\
    \leq\quad &
\ln k
        +
        \theta \sum_{j<i} \ip{\cQ_j - \hQ_j}{\pi_j - \barpi}_{\frv{\barpi}^{s}}
        +
        \theta^2 \sum_{j<i} \sup_{s,a} \hQ_j(s,a)^2
        \\
    \leq\quad &
    \ln k + \frac 1 {(1-\gamma)^2},
  \end{align*}
  thus concluding the proof of \cref{eq:ih:md} and the mirror descent part of the inductive
  step.

  The TD part of the inductive step is now direct: by \cref{eq:ih:md},
  then $\pi_i \in \cP$, and thus \cref{eq:ih:td:1} follows directly
  from \Cref{fact:td:simplified}, and \cref{eq:ih:td:2} follows via Markov's inequality after
  discarding another $\delta$ failure probability, bringing the total failure probability to
  $2i\delta$, and completing the inductive step and overall proof.
\end{proof}

\end{document}